\documentclass{article}

\usepackage{amsmath,amsfonts,bm}

\def\eqref#1{equation~\ref{#1}}

\def\1{\bm{1}}

\def\rmP{{\mathbf{P}}}

\def\rmS{{\mathbf{S}}}

\def\rmW{{\mathbf{W}}}

\def\mS{{\bm{S}}}

\DeclareMathAlphabet{\mathsfit}{\encodingdefault}{\sfdefault}{m}{sl}
\SetMathAlphabet{\mathsfit}{bold}{\encodingdefault}{\sfdefault}{bx}{n}

\usepackage{preamble}

\begin{document}
\maketitle


\begin{abstract}
The study of representations is widespread across fields, including neuroscience, psychology, and artificial intelligence. While representations are often studied and compared through similarities between stimuli, current methods provide only limited access to the dimensions that shape these representations and are often limited in interpretability. To overcome these challenges, here we introduce Similarity-Based Representation Factorization (SRF), a general computational method for recovering low-dimensional, non-negative, interpretable embeddings from similarity matrices derived from measured data. Across simulations and many neural, behavioral, and computational datasets, SRF recovers interpretable dimensions from diverse forms of representational data, even for very sparsely sampled, incomplete data. The dimensions derived from these datasets match those obtained by task-specific models, predict independent behavioral properties, improve exploratory analysis, and offer higher power for confirmatory hypothesis testing than comparing similarity matrices. Together, these results establish SRF as a general-purpose method with broad applications for uncovering, understanding, and using the dimensions underlying representations.
\end{abstract}

\section{Introduction}

A central goal in neuroscience, psychology, and artificial intelligence is to understand how biological and artificial systems organize sensory input into representations that support flexible behavior. Influential theoretical and methodological accounts have formalized representations through their geometry, with stimuli represented as points in a multidimensional space whose distances reflect similarity relationships between them~\citep{Torgerson1952,Shepard1957,Shepard1962,Nosofsky1986, Kriegeskorte2013, Scholkopf2002, Kornblith2019, Gardenfors2000}. Since representational spaces are latent and typically high-dimensional, a core challenge has been to infer their representational structure from data, that is, to recover the properties -- or dimensions -- along which stimuli are organized and to compare representations across systems. Empirical work has approached this challenge in two complementary ways. One line of work has used data-driven approaches to summarize representational data or recover low-dimensional structure, including multidimensional scaling \citep[MDS,][]{Torgerson1952}, principal component analysis \citep[PCA,][]{Jolliffe1986}, non-negative matrix factorization \citep[NMF,][]{Lee1999}, and t-distributed stochastic neighbor embedding \citep[t-SNE,][]{Maaten2008}. A second line of work has tested whether observed similarity relationships match those predicted by a representational hypothesis. In this context, representational similarity analysis \citep[RSA,][]{Kriegeskorte2008-01, Kriegeskorte2013,Schutt2023} has become particularly influential, using similarity matrices as a common format for comparing representations across species, modalities, and systems.

Despite these advances, existing approaches remain limited in revealing the dimensions that give rise to representational structure. For example, knowing that a dog has a more similar representation to a cow than to a car does not reveal whether this similarity is driven by \textit{animacy}, \textit{naturalness}, \textit{real-world size}, \textit{shape}, or other, potentially unknown properties. Likewise, RSA can test whether a predefined hypothesis explains observed similarities, but it does not recover the dimensions underlying them. Methods such as MDS, PCA, and t-SNE can summarize representational data or similarity relations in low-dimensional spaces, but their axes are often difficult to interpret as meaningful representational properties beyond the first few dimensions. NMF can yield parts-based, interpretable components, and related formulations can factorize similarity structure directly \citep{Ding2005, Kuang2012}. Yet this line of work has not been translated into a general empirical framework for representational data, where the number of representational dimensions is usually not known in advance, and similarity matrices are often sparse or selectively sampled. Existing approaches that operate at the level of similarities therefore do not jointly solve the problems of recovering interpretable dimensions, determining how many dimensions are needed, and learning from incomplete data.  

These limitations are consequential because interpretable dimensions are frequently the object of scientific interest. In neuroscience, such dimensions can support theories about brain mechanisms \citep{Naselaris2011, Kriegeskorte2021, Bracci2023}, identify stimulus properties that may drive neural responses \citep{Contier2024, Khosla2022, Sorscher2022, Jagadeesh2022, Bao2020}, and explain behavior \citep{Roads2024-01, Hebart2020}. In artificial intelligence and machine learning, they can reveal where and why models fail \citep{Geirhos2021,Geirhos2024,Lapuschkin2019,Samek2021}, define what individual deep neural network units encode \citep{Bau2017,Olah2017}, and pinpoint where models align with or diverge from human perception \citep{Sucholutsky2025,Mahner2025, Du2025, Mahner2026}.

Here we address existing challenges in recovering interpretable dimensions from representational data by introducing Similarity-Based Representation Factorization (SRF), a general method for recovering interpretable dimensions directly from similarity relationships. SRF operates on full or partially sampled similarity matrices, learns from observed entries without imputing missing values, and estimates dimensionality using a cross-validation procedure designed specifically for recovering dimensions from similarity data. We first validate SRF in simulations with ground truth and then apply it across human behavioral judgments, electrophysiology, neuroimaging, semantic association data, and deep neural network representations. Across these settings, SRF recovers interpretable dimensions even from highly sparse and undersampled similarity data. These dimensions support both exploratory discovery and confirmatory hypothesis testing and afford greater statistical power than the widely used RSA framework, particularly under sparsity and higher noise. Together, these results establish SRF as a general method for moving from descriptions of representational similarities to identifying the dimensions that organize them.

\begin{figure}[]
    \centering
    \includegraphics[width=1.0\linewidth]{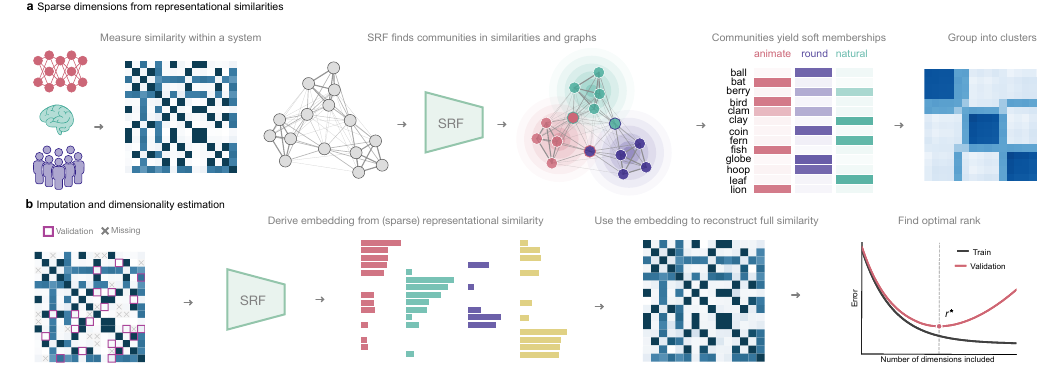}
    \caption{\textbf{Deriving non-negative, interpretable dimensions from similarity matrices.}
    \textbf{a},~Pairwise similarities between items can be represented as a matrix or, equivalently, as a weighted graph where edges connect similar items. Our method decomposes this similarity matrix into a small number of latent dimensions. In the illustrative graph, each color-coded community corresponds to a discovered dimension. Each item receives a non-negative loading on every dimension, yielding soft memberships. For example, \textit{lion} loads strongly on the \textit{animate} dimension, while \textit{ball} loads on both \textit{round} and \textit{natural}. These memberships reorganize the original similarity matrix into interpretable clusters, grouping items with soft contributions to one or more dimensions. \textbf{b},~To estimate dimensionality, observed similarity entries are split into training and held-out validation sets; the held-out entries are treated as missing values during training. Using our model, we factorize the training entries into an embedding with $r$ dimensions.
    Using the dot product similarity on this embedding gives a fully sampled similarity matrix that also predicts the held-out (or missing) entries. The optimal rank $r^*$ is selected at the minimum of the validation error curve.
    }
    \label{fig:overview}
\end{figure}

\section{Results}\label{sec:result}

SRF is a general-purpose method for deriving interpretable dimensions from representational data. Its input is a symmetric, non-negative similarity matrix, derived from feature-based measurements, such as neural responses or artificial neural network activations, or measured directly, as is the case for behavioral similarity judgments or association data. This makes SRF applicable to common data types in cognitive science, neuroscience, machine learning, and other fields that study representations.

SRF is built around symmetric non-negative matrix factorization \citep[sNMF,][]{Ding2005, Kuang2012}. We chose this formulation because it provides several useful properties for recovering interpretable dimensions from similarities. First, by operating on similarity matrices, it avoids the more complex problem of recovering all dimensions of the original feature space. Depending on the similarity measure, this can make the factorization invariant to transformations such as rotations, reflections, or translations in the original representation, thereby emphasizing stable relational structure between stimuli over arbitrary or noisy variation in the measured feature space. This property also allows SRF to operate on similarities regardless of whether they were computed from feature representations or acquired directly. Second, the symmetric factorization yields a single embedding matrix, which simplifies interpretation because the same dimensions both define the embedding and reconstruct the similarity matrix. Third, the non-negativity constraint encourages additive, naturally sparse dimensions and supports parts-based factors that are often easier to interpret \citep{Hebart2020, Fyshe2015}. Finally, the recovered factors are reasonably stable under modest dimensionality misspecification (see Supplementary Information~\ref{subsec:misspecify_k}), suggesting that small misspecification does not qualitatively alter the inferred structure.

Despite these benefits, sNMF has seen surprisingly little use in the study of real-world representational data, largely because existing formulations have not addressed two critical challenges. First, the dimensionality of the embedding is usually unknown. Choosing it appropriately matters because too few dimensions may merge distinct representational factors or miss important structure, whereas too many dimensions may split coherent factors or overfit noise. Second, many similarity datasets are incomplete because exhaustive sampling of all stimulus pairs is infeasible or because of missing data. We address both problems with SRF by developing an optimization procedure that fits the model while treating arbitrary entries as unobserved. This makes it possible to learn from incomplete similarity matrices without imputing missing values. This missing-data formulation is also what enables cross-validation for determining the underlying dimensionality. Specifically, for each candidate dimensionality, a restricted set of similarity entries is held out, treated as missing during training, and then predicted from the learned embedding. Because entries in a similarity matrix are statistically dependent, they cannot be held out arbitrarily without compromising the validation procedure. We therefore developed a cross-validation protocol using a restricted hold-out scheme that allows dimensionality to be evaluated by generalization to unseen similarities (Fig.~\ref{fig:overview}b; see Methods).

\subsection{Recovering dimensions from simulated data}

\begin{figure}[]
    \centering
    \includegraphics[width=1.0\linewidth]{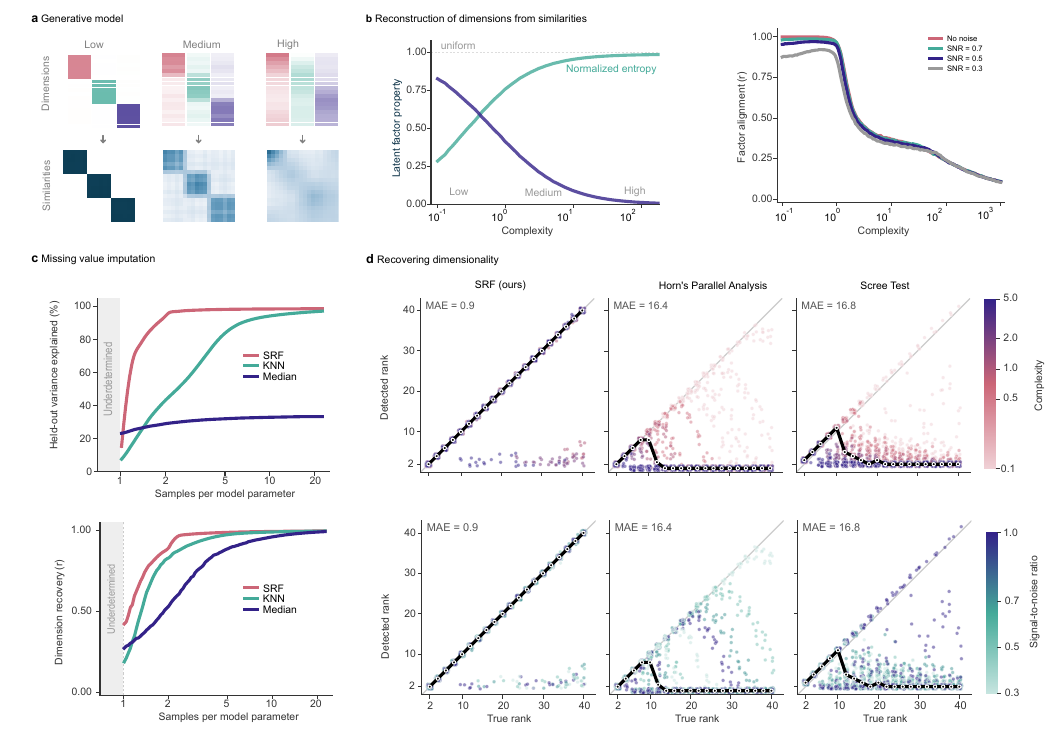}
    \caption{\textbf{Recovering latent dimensions from simulated representations.}
    \textbf{a},~Ground-truth embeddings were generated from a Dirichlet model with concentration parameter $\alpha$ controlling representational complexity. Low $\alpha$ produces sparse, cluster-like memberships where each item belongs to few dimensions, while high $\alpha$ produces diffuse, overlapping memberships where items spread across many dimensions. The top row shows the resulting factor matrices and the bottom row the corresponding similarity matrices for three complexity levels.
    \textbf{b},~Left: properties of the simulated ground-truth embeddings as a function of $\alpha$, measured by Hoyer sparsity and normalized entropy. Right: factor alignment, the correlation between recovered and ground-truth dimensions after optimal matching, shown for different noise levels.
    \textbf{c},~Our model achieves high reconstruction performance on missing similarities. The top panel shows the fraction of held-out variance explained and the bottom panel shows factor recovery, both as a function of observed samples per model parameter. The gray region marks the under-determined regime where fewer similarities are observed than there are parameters to estimate. The model outperforms $k$-nearest neighbor and median imputation across all levels of missingness. \textbf{d},~Rank detection accuracy for three methods. Cross-validation (ours) selects the rank that best predicts held-out similarities. Parallel analysis retains dimensions with eigenvalues exceeding those of random data. The scree test selects the rank where eigenvalues level off. Each dot represents one simulation and the diagonal marks perfect detection. Simulations in the top row vary the sparsity of the ground-truth embedding (color) and in the bottom row the signal-to-noise ratio. Mean absolute error (MAE) is shown for each method.}
    \label{fig:simulation}
\end{figure}

We first used simulations with known ground-truth dimensions to test three core requirements of SRF: (1) the recovery of latent dimensions and their dimensionality, (2) the ability to learn from incomplete similarity matrices, and (3) the validity of inferences from our cross-validation approach. We generated synthetic similarity matrices from a small set of underlying dimensions (see Methods) and controlled representational complexity via a parameter $\alpha$. This parameter modulated the ground-truth embeddings (Fig.~\ref{fig:simulation}a), ranging from sparse, cluster-like dimensions at low $\alpha$ (high Hoyer sparsity, low entropy) to diffuse and overlapping dimensions at high $\alpha$ (Fig.~\ref{fig:simulation}b, left). SRF recovered the generating dimensions accurately at low and moderate complexity, with recovery declining as dimensions became more diffuse and overlapping. Since empirical similarity measurements are often noisy, we next tested robustness to measurement noise by perturbing the ground-truth dimensions. Recovery degraded only modestly as the signal-to-noise ratio decreased (Fig.~\ref{fig:simulation}b, right), indicating that SRF can recover latent dimensions reliably even under moderate noise levels.

Next, we tested whether SRF could recover dimensions from incomplete similarity data, which is a common setting in behavioral and association-based datasets. We quantified the degree of missingness as the ratio of observed similarities to model parameters, with values below 1 indicating an under-determined regime with fewer observations than parameters to estimate (gray region in Fig.~\ref{fig:simulation}c). We compared SRF against $k$-nearest neighbor and median imputation, two standard baselines for handling missing data. SRF predicted held-out similarities more accurately than both baselines across nearly all levels of missingness (Fig.~\ref{fig:simulation}c, top), with the same pattern for recovery of the underlying dimensions (Fig.~\ref{fig:simulation}c, bottom). Notably, applying $k$-nearest neighbor or median imputation before fitting yielded worse recovery than leaving the missing entries unobserved. This suggests that imputation can distort pairwise similarity structure and bias inference about the generating dimensions.

Finally, we tested whether SRF cross-validation could identify the correct number of underlying dimensions. We evaluated recovery across true dimensionalities, degrees of missing data, and signal-to-noise ratios. Across these settings, SRF identified the true dimensionality more accurately than the scree test and Horn's parallel analysis, two common cognitive-science methods (Fig.~\ref{fig:simulation}d). Together, these simulations show that SRF recovers the dimensions underlying representational similarities even under noise and substantial undersampling, and that SRF cross-validation provides a principled way to estimate dimensionality from held-out similarities.

\subsection{Deriving dimensions from similarities improves statistical power over RSA}

\begin{figure}[]
    \centering
    \includegraphics[width=1.0\linewidth]{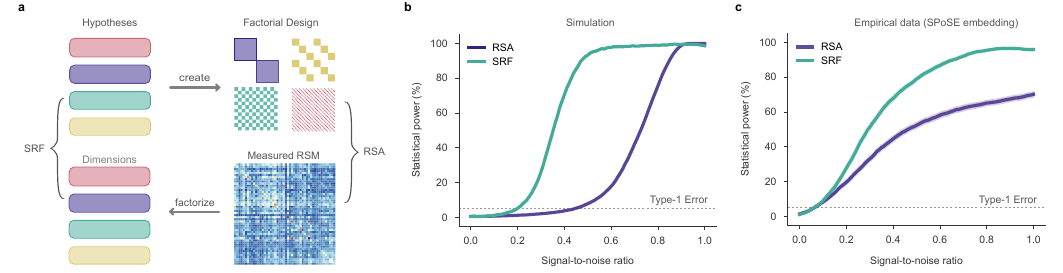}
    \caption{\textbf{SRF improves statistical power for hypothesis testing compared to RSA.}
    \textbf{a},~Schematic of the comparison. Four orthogonal categorical factors (colored bars) are encoded in a one-hot feature matrix $X$, yielding a clean similarity matrix $S = XX^\top$ that sums their contributions. This matrix contains the summed contributions of all factors and is then corrupted by Gaussian noise. RSA tests each hypothesis factor against the full similarity matrix, where the signal from any single factor is diluted by contributions from all other factors and noise. SRF first decomposes the similarity matrix into individual dimensions and then tests each hypothesis against its matched dimension, isolating the relevant signal.
    \textbf{b},~Statistical power, defined as the fraction of true factors correctly detected as significant, as a function of signal-to-noise ratio. SRF detects the true factors at substantially lower signal-to-noise ratios than RSA. The dashed line indicates the chance level at a significance threshold of $\alpha = 0.05$.
    \textbf{c},~Same comparison using dimensions sampled from the SPoSE semantic embedding \citep{Hebart2020}, which are sparse, non-negative, and correlated, providing a more stringent test than orthogonal factors. The advantage of SRF is even more pronounced, particularly at low signal-to-noise ratios.}
    \label{fig:rsa}
\end{figure}

In many applications, researchers aim not only to discover which dimensions structure representations, but also to test whether specific properties (e.g.\ \textit{animacy} or \textit{real-world size}) explain them. A widely used tool for confirmatory inference is RSA, which compares an observed similarity matrix with a model matrix that encodes a specific hypothesis, for example, a matrix in which objects from the same hypothesized category are more similar than objects from different categories. 

However, observed similarity matrices typically reflect variation along multiple properties at once. A hypothesis about a single property is therefore tested against a matrix shaped by many other properties and measurement noise, which can reduce statistical power. Since SRF decomposes similarity structure into distinct dimensions, this allows testing a hypothesis directly against the dimension that best matches it, rather than against the full similarity matrix (Fig.~\ref{fig:rsa}a). Consequently, this should provide a more sensitive statistical test. 

We first compared the statistical power of SRF-based hypothesis testing against RSA. We used a factorial design with four orthogonal dimensions (Fig.~\ref{fig:rsa}a, Methods). From this design, we generated pairwise similarities, added noise across a range of signal-to-noise ratios, and used SRF to derive dimensions from the noisy similarities. We then compared power for detecting each data-generating dimension using SRF against classical RSA, in which the corresponding property matrix (e.g.\ \textit{animacy}) was correlated with the full noisy similarity matrix. Across noise levels, SRF detected the data-generating dimensions with greater sensitivity than RSA (Fig.~\ref{fig:rsa}b). Note that dimension matching was carried out using cross-validation to avoid positive bias. Thus, decomposing noisy similarities into dimensions can increase the sensitivity of confirmatory hypothesis testing.

To assess whether this advantage extends to dimensions typically identified through data-driven discovery, we repeated the analysis using empirically derived, correlated dimensions. Specifically, we sampled dimensions from the SPoSE embedding \citep{Hebart2020}, which provides sparse, non-negative, and interpretable dimensions derived from human similarity judgments. Unlike the factorial design, these dimensions vary in their contribution to the overall similarity structure and are not statistically independent. Applying the same noise perturbation and testing procedure, we found that SRF again outperformed RSA, even more strongly than in the factorial design (Fig.~\ref{fig:rsa}c).

To understand this advantage, we examined statistical power as a function of how much variance each dimension contributed to the overall similarity matrix (Supplementary Fig.~\ref{fig:variance_quartile}). RSA had low power for dimensions that contributed little variance, because these dimensions contribute weakly to the full similarity matrix against which hypotheses are tested. In contrast, SRF retained high sensitivity by isolating these dimensions from the overall similarity structure. Thus, dimensions that would be difficult to detect with RSA can become accessible through data-driven decomposition. Together, these results indicate that decomposing similarity structure into dimensions can substantially increase statistical power for confirmatory dimension-specific hypothesis testing, while enabling the data-driven discovery of low-variance dimensions otherwise obscured in the full similarity matrix.

\subsection{SRF reveals stable and interpretable dimensions across empirical datasets}

\begin{figure}[]
    \centering
    \includegraphics[width=1.0\linewidth]{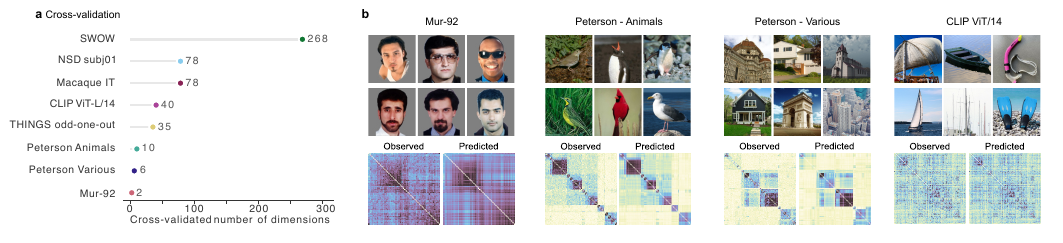}
    \caption{\textbf{Dimension discovery across empirical datasets.}
\textbf{a},~Optimal number of dimensions across diverse empirical datasets, obtained by the cross-validation procedure. \textbf{b},~Example dimensions (top) and observed vs.\ predicted similarity (bottom) across several datasets. Predicted similarities show that a small number of dimensions account for much of the variance in the original similarities.}
    \label{fig:visualization}
\end{figure}

Having established ground-truth recovery in simulations, we next tested whether SRF could recover stable and interpretable dimensions in empirical datasets spanning multiple domains. Specifically, we applied SRF in an exploratory fashion to empirical datasets covering directly measured behavioral similarities \citep[THINGS,][]{Hebart2023}\citep[Peterson-animals and Peterson-various;][]{Peterson2018}\citep[Mur-92,][]{Mur2013}, feature activations from a vision-language model \citep[CLIP-ViT,][]{Radford2021}, electrophysiological recordings from macaque inferior temporal cortex \citep{Papale2025}, fMRI responses from the Natural Scenes Dataset \citep[NSD,][]{Allen2022}, and free associations from the Small World of Words project \citep[SWOW,][]{DeDeyne2019} (see Supplementary Table~\ref{tab:datasets} for an overview). Unlike the other datasets, SWOW forms a sparse, incompletely sampled graph in which only 1.27\% of word pairs are connected (Methods).

Across datasets, SRF yielded similarity embeddings with dimensionality ranging from 2 to 268 (Fig.~\ref{fig:visualization}a and Supplementary Fig.~\ref{fig:cv_curves} for cross-validation curves). At these ranks, the embeddings accurately reconstructed the observed similarities (median $R^2 = 0.88$ across densely sampled datasets). For SWOW, where the observed associations formed an extremely sparse graph rather than a dense matrix, we evaluated reconstruction as link prediction, obtaining an AUC of 0.95, demonstrating that the recovered dimensions separated true associations from non-associations even under extreme sparsity. Dimensions were also stable across initializations. For each dataset, we re-ran SRF 30 times from random starts and measured split-half reliability across held-out items (Methods). Mean Spearman-Brown corrected reliability exceeded 0.82 for every dataset, indicating that the same dimensions reappeared across runs rather than reflecting variability specific to a single initialization.

Inspection revealed highly interpretable dimensions across behavioral, neural, and model-derived data (Fig.~\ref{fig:visualization}b). Stimuli loading most strongly on each dimension formed coherent groupings, often matching distinctions known from the cognitive and neural sciences. For example, in Mur-92, SRF recovered the animate/inanimate distinction, mirroring the well-established role of animacy as a major organizing principle of human object representation \citep{Konkle2013}. In Peterson-animals, it separated taxonomic groups such as birds and insects. In CLIP, it recovered perceptual dimensions, such as color, and semantic dimensions, such as animacy, in line with previous work \citep{Mahner2025}. In SWOW, SRF recovered interpretable dimensions spanning multiple levels of abstraction, from taxonomic groupings such as food-related items to abstract themes such as physical strength or affection (Fig.~\ref{fig:words}a,b). Together, these findings show that representational similarities from human judgments, neural recordings, deep neural networks, and association data can be summarized by stable and interpretable dimensions, allowing the same dimensions to be compared across brains, behavior, and AI.

\begin{figure}[t]
    \centering
    \includegraphics[width=1.0\linewidth]{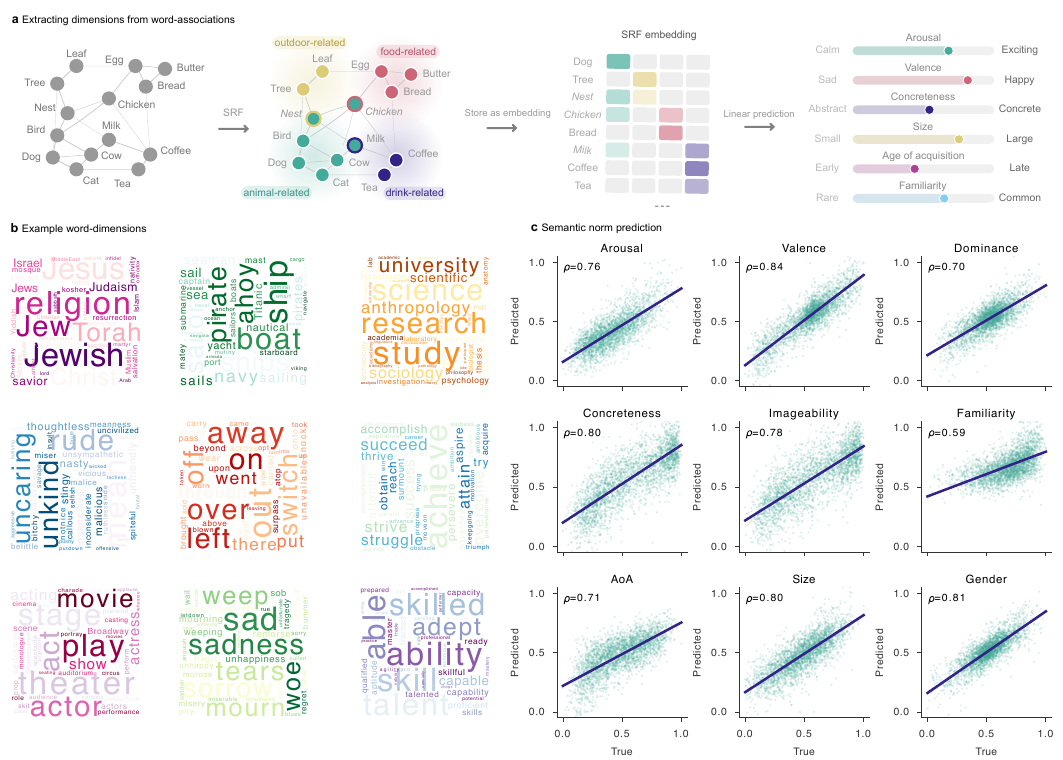}
    \caption{\textbf{SRF recovers interpretable semantic dimensions from word association data.}
    \textbf{a},~The Small World of Words dataset \citep{DeDeyne2019} provides free word associations that can be represented as a weighted graph (left). SRF decomposes the association similarity matrix into dimensions, shown as color-coded communities in the graph (middle). The resulting embedding is then used as input to cross-validated ridge regression to predict independent behavioral ratings (right).
    \textbf{b},~Example dimensions recovered by SRF, displayed as word clouds where word size is proportional to loading. The dimensions capture meaning at multiple levels of abstraction, from concrete categories such as alcoholic drinks or food to more abstract groupings such as physical strength or affection.
    \textbf{c},~Predicted versus actual behavioral ratings for nine properties from the Glasgow Norms \citep{Scott2019}, with Spearman correlations shown for each. AoA = age of acquisition.}
    \label{fig:words}
\end{figure}

\subsection{Sparse word associations yield semantic dimensions that predict independent behavior}

Having demonstrated that SRF recovers stable and interpretable dimensions across empirical datasets, we next asked whether these dimensions capture information that generalizes beyond the similarity data used to estimate them. This is important because a useful representation should support downstream inferences. We tested this in SWOW by using the recovered dimensions as features in models predicting independent behavioral ratings.

In SWOW, participants were presented with a cue word and reported three words as free associations. The dataset contains approximately 2.4 million responses across 8{,}647 cue words. Word associations are naturally represented as a sparse graph of semantic links than as a similarity matrix. We therefore converted the graph into pairwise similarities using positive pointwise mutual information \citep{Church1990}, which measures whether two words co-occur more often than expected by chance (Methods). The resulting similarity matrix was extremely sparse, with only 1.27\% of word pairs showing positive association, making it a stringent test of SRF.

We then tested whether these dimensions predicted independent behavioral ratings from the nine scales of the Glasgow Norms \citep{Scott2019}, available for a subset of SWOW words (Fig.~\ref{fig:words}a,c). The SRF embedding predicted all nine properties accurately, with correlations ranging from $\rho = 0.59$ (familiarity) to $\rho = 0.84$ (valence; Fig.~\ref{fig:words}c). Beyond the Glasgow Norms, the embedding also strongly predicted independent benchmarks of semantic similarity and relatedness (Supplementary Fig.~\ref{fig:similarity_benchmarks}).

Together, these downstream tests show that SRF dimensions are useful not only as interpretable summaries of representational structure, but also as compact, reusable features for predicting behavior in independent tasks.

\section{Discussion}

Understanding representations requires identifying the dimensions along which biological and artificial systems organize stimuli. We introduced Similarity-Based Representation Factorization (SRF) as a general method for recovering such dimensions from representational similarities, yielding non-negative, interpretable embeddings that can be estimated from incomplete data and whose dimensionality is selected by generalization to unseen data. Across simulations and real-world datasets spanning human behavior, neural recordings, semantic associations, and deep neural networks, SRF recovered stable dimensions that captured the core structure of measured representations. When applied to dimension-specific hypothesis testing, SRF yielded greater sensitivity than RSA, a dominant framework for targeted analyses of representational structure. By decomposing similarities into dimensions, SRF thus preserves the generality of similarity-based analysis while making the underlying factors accessible for interpretation, enabling more sensitive dimension-specific hypothesis tests, and supporting independent downstream prediction. 

Building on a long tradition of formalizing representations through geometry \citep{Torgerson1952,Shepard1957,Kriegeskorte2013}, SRF bridges two widely used approaches for representational analysis. RSA provides a powerful common format for comparing representations across systems using similarity matrices, but typically does not allow us to infer the dimensions that generate this similarity structure. Conversely, methods such as MDS, PCA, or NMF reveal structure in high-dimensional data, but standard applications often lack the combination of interpretable dimensions, principled dimensionality selection, and native support for incomplete similarity matrices. SRF bridges these approaches by operating directly on representational similarities while recovering interpretable dimensions that generate the representational structure. 

Interpretable, dimension-wise representations are increasingly used for mechanistic explanation in cognitive science, neuroscience, and machine learning. Non-negative embeddings have been used to contrast category-selective and distributed accounts of neural organization \citep{Khosla2022,Contier2024}, to identify interpretable dimensions underlying human similarity judgments \citep{Hebart2020}, to compare human, macaque, and deep network representations \citep{Mahner2025,Du2025,Mahner2026, vanBree2026}, and to build encoding models of neural representations \citep{Zeng2025,Klindt2025}. In machine learning, sparse autoencoders similarly decompose network activations into interpretable units that can be inspected and manipulated \citep{Bricken2023,Cunningham2024}. SRF generalizes this dimension-wise perspective to representational similarities, making it possible to move beyond scalar comparisons of similarity toward an explanation of which factors determine the underlying representation.

Future work could extend SRF in several directions. While linear readouts are widely used as a proxy for representational content \citep{DiCarlo2012,Majaj2015}, this formulation may miss similarity structure that depends on nonlinear interactions among dimensions. Moreover, representational spaces may contain hierarchical organization, with broad categorical distinctions scaffolding finer-grained features \citep{Costa2025,Muttenthaler2025}, and extending SRF to derive hierarchically structured dimensions could improve understanding \citep{Nickel2017,Khrulkov2020}. SRF also currently learns dimensions separately for each dataset, so cross-system comparisons require post-hoc alignment; joint factorizations could instead estimate shared and system-specific dimensions directly \citep{Thasarathan2025}. Finally, since representational similarities can be viewed as weighted graphs, SRF may generalize to other domains where pairwise relationships are sparse and partially observed, including protein interaction networks, brain connectivity graphs, or social networks. Its ability to learn from incomplete data may therefore make SRF useful beyond the representational domains studied here.

\section{Methods}

\subsection{Similarity-based representation factorization (SRF)}

SRF models  a non-negative similarity matrix $S \in \mathbb{R}_{\geq 0}^{n \times n}$ for $n$ stimuli using a low-rank symmetric factorization,

\begin{equation}
S \approx W W^\top,
\end{equation}

where $W \in \mathbb{R}_{\geq 0}^{n \times r}$ and $r \ll n$. By construction, the reconstruction $WW^\top$ is symmetric and positive semi-definite. While empirical similarity matrices are not always positive semi-definite \citep{Laub2004, Laub2006}, the factorization remains robust when $\mS$ deviates mildly from these properties. Each row $w_i$ of $W$ represents item $i$ as a non-negative weight vector over $r$ dimensions, and the similarity between items $i$ and $j$ is approximated by $w_i^\top w_j$. As dimensions can only contribute positively to similarity, this supports an additive and parts-based representation, which can be interpreted more directly. Because $W$ is not normalized, columns of $W$ can differ in scale, and columns with larger norm contribute proportionally more to the reconstructed similarity.

When all similarities are observed, SRF estimates $W$ by minimizing the Frobenius reconstruction error between $S$ and $WW^\top$. In many applications, however, only a subset of pairwise similarities is observed. To accommodate this case, we define a symmetric binary mask $\mathcal{M} \in \{0,1\}^{n \times n}$ where $\mathcal{M}_{ij}=\mathcal{M}_{ji}=1$ if $S_{ij}$ is observed, and $0$ otherwise. We assume the diagonal is observed, so $\mathcal{M}_{ii}=1$. We then estimate $W$ by solving

\begin{equation}
\min_{W \ge 0} \frac{1}{2} \lVert \mathcal{M} \odot (S - W W^\top) \rVert_F^2,
\end{equation}
where $\odot$ denotes element-wise multiplication and $\lVert \cdot \rVert_F$ is the Frobenius norm. Thus, only observed similarities contribute to the loss. Unobserved entries are excluded from the fitting objective but, after fitting, the model still yields predictions $(WW^\top)_{ij}$ for those unobserved pairs. We use this observation for our cross-validated rank selection (see Supplementary Information~\ref{sec:cross-validation}). We optimize the objective using the alternating direction method of multipliers (ADMM) \citep{Boyd2011}. Full derivations, update rules, and implementation details are provided in Supplementary Information~\ref{sec:srf-optim}.

\subsection{Model fitting, dimensionality selection, and stability}

The appropriate dimensionality $r$ is typically unknown, and is often itself a question of interest. We therefore fit SRF across the range of plausible candidate ranks and select the rank that best predicts held-out observed similarities. For each candidate dimensionality, we partition the observed similarities into training and validation folds, fit SRF on the training entries, and evaluate mean squared prediction error on the held-out entries. Because entries in a similarity matrix are not independent, we cannot use a standard entrywise matrix cross-validation. Instead, we use a restricted hold-out scheme that we designed specifically for similarity data. The dimensionality $\hat r$ that minimizes the average prediction error across folds is selected, and SRF is then refit at that rank on the full data. We used 5 folds and 5 repeats (25 fits per candidate rank), and optimized SRF via ADMM with penalty $\rho = 3$, up to 10{,}000 iterations (200 outer $\times$ 50 inner) or until convergence.

To assess the stability of the recovered embeddings, we fit SRF repeatedly from random initializations. Since the SRF objective is non-convex, these runs can converge to different factorizations even for the same dimensionality. Some of this variability is due only to permutation ambiguity in the recovered dimensions: if $W$ is a solution, then $W P$ is an equivalent solution for any permutation matrix $P$. Beyond this, different initializations may also converge to distinct but similarly good local minima. We therefore align the recovered dimensions across runs using the Hungarian algorithm based on pairwise column similarities. We then select the most central run as the final solution. We report this representative run instead of averaging embeddings, since averaging need not correspond to a solution of the SRF objective and can distort the reconstructed pairwise similarities. To quantify dimension stability across repeated fits, we compute cross-validated split-half reliability. For each random split of objects into two halves, dimensions are matched between pairs of runs on one half and evaluated on the other. Reliabilities are then averaged across run pairs and splits and corrected with the Spearman--Brown formula (see Supplementary Table~\ref{tab:datasets}). The final consensus embedding was computed from 30 random initializations, and split-half reliability was averaged over 100 random object splits.

In the rest of this section, we provide additional details on the rationale for developing the restricted hold-out scheme, and pointers to a full discussion in Supplementary Information~\ref{sec:cross-validation}. 

A useful property of the symmetric non-negative factorization used by SRF is that it is relatively insensitive to rank misspecification. The lower bound on the dimension recovery error grows only slowly with $\vert r - r^* \vert$ (see Supplementary Information~\ref{subsec:misspecify_k}). Modest misestimates of $r$ incur modest costs in reconstruction and prediction. Accurate rank selection nevertheless remains important for interpretability. If $r > r^*$, SRF can use the extra dimensions to model noise or sampling variation. Such dimensions may appear semantically coherent, even though they do not reflect stable structure in the similarity data.

The use of cross-validation for rank selection, in the context of factorizing low-rank similarity matrices, can lead to inflated estimates of rank. The entries of $S$ are not independent observations but interdependent (coupled) through the same low-dimensional structure. As a result, holding out individual entries does not necessarily create genuinely new prediction targets, unlike what would happen in a completely random full-rank matrix. Once the number of observed entries exceeds the matrix-completion threshold of $O(nr\,\mathrm{polylog}\,n)$ \citep{Candes2009, Gross2011}, the validation entries can be determined from the training entries with high probability (see Supplementary Information~\ref{subsec:info_leakage}). Validation error can therefore measure consistency with values already implied by the training data, rather than generalization to unseen similarities. This creates an information leakage problem: validation entries may no longer provide an independent test of prediction performance. It can bias rank selection toward overly large $r$, because higher-rank models may be rewarded for matching values that are already implied by the training entries. To address this problem, we use a restricted hold-out scheme that fixes an a priori sparse observation pattern before cross-validation. Through this, we keep each fold's training set below the matrix-completion threshold. Validation is then performed within this masked design, yielding a more conservative estimate of predictive performance. A detailed treatment of the leakage problem in standard alternatives is given in Supplementary Information~\ref{subsec:cv_protocol}.

\subsection{Constructing similarity matrices}
\label{sec:similarity_construction}

SRF operates on symmetric, non-negative similarity matrices. We constructed such matrices differently depending on the data source. For datasets that already provided direct pairwise judgments, we used the reported pairwise structure directly after symmetrizing and rescaling it to a non-negative similarity measure, if either step was necessary.

For triplet odd-one-out judgments, participants viewed three items and selected the item that was least similar to the other two. For each pair $(i,j)$, let $m_{ij}$ denote the number of triplets in which the pair was shown and $c_{ij}$ the number in which both items were retained as the most similar pair. We estimated pairwise similarity as
\[
s_{ij} = \frac{c_{ij} + \alpha}{m_{ij} + 2\alpha},
\]
with $\alpha = 1$ for mild Laplace smoothing. This yielded a symmetric similarity matrix bounded in $[0,1]$, with pairs that were never sampled treated as unobserved.

For word associations, we constructed a weighted directed graph from cue–response counts, removed self-loops and nodes without outgoing edges, and retained the largest strongly connected component. For each pair $(i,j)$, let $n_{ij}$ denote the number of times word $j$ was given in response to cue $i$, and let $N_{ij} = n_{ij} + n_{ji}$ be the symmetrized count. We estimated pairwise similarity as the Positive Pointwise Mutual Information \citep[PPMI;][]{Church1990},
\[
  s_{ij} = \max\left(\log_2 \frac{p_{ij}}{p_i\, p_j},\; 0\right)
\]
where $p_{ij} = N_{ij} / \sum_{kl} N_{kl}$, $p_i = \sum_j p_{ij}$, and $s_{ii} = -\log_2 p_i$. This yielded a sparse, symmetric, non-negative similarity matrix.

For continuous feature vectors, such as neural responses or deep neural network activations, we constructed similarities using kernels that produce non-negative matrices. For non-negative features, we used the linear kernel $k(\mathbf{x}_i, \mathbf{x}_j) = \mathbf{x}_i^\top \mathbf{x}_j$. For signed features, we used the radial basis function kernel
\begin{equation}
    k(\mathbf{x}_i, \mathbf{x}_j) = \exp\left(-\frac{\|\mathbf{x}_i - \mathbf{x}_j\|^2}{2\sigma^2}\right),    
\end{equation}

with bandwidth $\sigma = \alpha \cdot \tilde{d}$, where $\tilde{d}$ is the median pairwise Euclidean distance \citep{Scholkopf2002}. We set $\alpha = 0.4$ following previous work \citep{Mahner2026} showing that this multiplier provides a good estimate of the bandwidth for symmetric NMF, maximizing both factorization stability and reconstruction quality across diverse feature representations.

\subsection{Simulated data}

Ground-truth factor matrices $W \in \mathbb{R}_{\geq 0}^{n \times r}$ were generated by drawing each row independently from a symmetric Dirichlet distribution,
\begin{equation}
    \mathbf{w}_i \sim \mathrm{Dir}(\alpha, \ldots, \alpha),    
\end{equation}
so that rows summed to one. The concentration parameter $\alpha$ controlled representational complexity: small values produced sparse, cluster-like dimensions, whereas large values produced more diffuse and overlapping dimensions. Ground-truth similarity matrices were generated using a linear kernel $S = W W^\top$. When noise was included, we added Gaussian noise to $S$ and chose the noise scale by binary search so that the resulting matrix matched a prespecified signal-to-noise ratio (SNR), defined as the ratio of signal variance to total variance. Negative entries were clipped to zero and $S$ was symmetrized.

We used these simulations for three evaluations. For dimension recovery, we varied $\alpha$ and SNR and fit SRF at the true rank. Reconstruction quality was measured as explained variance between ground-truth and recovered similarity matrices, and dimension recovery as the correlation between dimensions after Hungarian matching. For missing-value recovery, we randomly masked off-diagonal entries at varying retention rates and compared SRF with $k$-nearest neighbor and median imputation. We quantified sampling density as the number of observed similarities per model parameter,

\begin{equation}
    \text{sampling density} = \frac{m_{\mathrm{obs}}}{nr},    
\end{equation}

where $m_{\mathrm{obs}}$ is the number of observed off-diagonal similarities and $nr$ is the number of free parameters in the latent embedding. Performance was evaluated on held-out similarities and on recovery of the ground-truth dimensions. For rank detection, we varied the true rank, complexity, and SNR, and compared SRF (with cross-validation) against two commonly used methods for determining dimensionality: parallel analysis and the scree test. Accuracy was quantified as the mean absolute error between the selected and true rank.

\subsection{Statistical power comparison}
We compared RSA and SRF in two simulation designs. In the factorial design, we crossed four orthogonal categorical factors with three or more levels each and encoded the resulting design in a one-hot feature matrix $X$. In the SPoSE design, each repeat sampled 50 objects and 5 dimensions from the SPoSE embedding \citep{Hebart2020}, yielding a continuous, sparse, and correlated feature matrix $X$. In both cases, clean similarity matrices were generated using a dot-product kernel $S = X X^\top$. We then added Gaussian noise and chose the noise scale by binary search so that the resulting matrix matched a prespecified signal-to-noise ratio (SNR). Negative entries were clipped to zero and matrices were symmetrized.

For RSA, each ground-truth column $x_j$ defined a hypothesis matrix $H_j = x_j x_j^\top$. Significance was assessed with a Mantel test \citep{Mantel1967} by correlating the upper-triangular entries of $H_j$ and $S$ and comparing the result with a null distribution obtained by jointly permuting the rows and columns of $S$. For SRF, we factorized $S$ at the true rank and matched recovered dimensions to ground-truth columns with the Hungarian algorithm using absolute correlations. To avoid selection bias in the matching step, we used leave-one-out alignment, in which the permutation for each object was determined from the remaining $n-1$ objects, and recomputed this alignment under each permutation when constructing the null distribution. All tests were one-sided with 1{,}000 permutations and Benjamini--Hochberg correction \citep{Benjamini1995} at $\alpha = 0.05$. Statistical power was defined as the fraction of ground-truth hypotheses detected as significant across 1{,}000 independent repeats.

\subsection{Empirical datasets}
We applied SRF to seven datasets spanning human similarity judgments, neural recordings, deep neural network activations, and semantic association data. The behavioral datasets comprised the Mur-92 dataset \citep{Mur2013}, which contains 92 images of faces, bodies, and objects collected with the multi-arrangement task, and two datasets from \citet{Peterson2018}, which provide pairwise similarity ratings for animal images and for images spanning diverse categories. For neural data, we used electrophysiological recordings from macaque inferior temporal cortex in response to 1{,}854 THINGS images \citep{Papale2025} and fMRI responses to natural scenes from the Natural Scenes Dataset \citep{Allen2022}. For model representations, we extracted image-encoder features from CLIP ViT-L/14 \citep{Radford2021} in response to 1{,}854 object images from THINGS-plus \citep{Stoinski2024}. For semantic association data, we used the Small World of Words English dataset \citep{DeDeyne2019}.

For datasets that already provided pairwise similarities, we used the reported similarity matrices directly. For the others, we describe the procedures to generate similarity matrices in detail in Section~\ref{sec:similarity_construction}, and report them in abridged fashion here. For the macaque electrophysiology dataset, we retained only channels with split-half reliability greater than 0.3 before constructing similarity matrices from them. For the remaining datasets, we constructed similarity matrices using the RBF kernel. For the SWOW dataset, we constructed a PPMI-based similarity matrix from association counts. All datasets were then factorized with SRF using the same fitting, rank-selection, and stability procedures. Reconstruction quality was reported as $R^2$ for dense similarity matrices. For SWOW, where positive PPMI entries form a sparse graph and zero entries indicate no positive association above chance, we reported ROC-AUC for discriminating all 476{,}383 upper-triangular positive PPMI pairs from 476{,}383 upper-triangular zero-PPMI pairs sampled without replacement, using reconstructed similarities $WW^\top$ as scores. We used all positive pairs and an equal-size sample of zero pairs because zero-PPMI pairs vastly outnumber positive pairs. This gives a balanced and computationally tractable estimate of the probability that a positive association is ranked above a zero association.

\subsection{THINGS behavioral prediction}

We used crowd-sourced odd-one-out judgments from the THINGS database \citep{Hebart2019}, comprising 4.7 million triplet responses across 1{,}854 objects. Pairwise similarities were constructed from the triplets as described in Section~\ref{sec:similarity_construction}. The SRF rank was selected by cross-validation on the training set, and the final model was evaluated on a held-out test set. To predict triplet judgments, we reconstructed pairwise similarities from the fitted embedding and predicted the odd-one-out as the object not in the most similar pair. Prediction accuracy was measured as the percentage of correctly predicted odd-one-out choices on the held-out test set. We compared SRF with the published 66-dimensional SPoSE embedding \citep{Hebart2020}, which was trained directly on the original triplet judgments. To test whether the recovered pairwise similarities captured behavior beyond the triplet task itself, we additionally correlated SRF-predicted similarities for a separate set of 48 objects with independent human similarity ratings.

\subsection{Word association analysis}

We used the Small World of Words English dataset \citep{DeDeyne2019}, comprising word association data from more than 90{,}000 participants. Each participant saw cue words and provided three associated responses. We used the preprocessed strength file (R123), which aggregates responses across approximately 100 participants per cue. Pairwise similarities were constructed from the association data as described above. To test whether the recovered dimensions generalized beyond the association data themselves, we predicted independent behavioral ratings from the Glasgow Norms \citep{Scott2019} for words present in both datasets. We used ratings including valence, arousal, concreteness, imageability, size, familiarity, and age of acquisition, all collected in separate behavioral experiments. For each rating variable, we fit a ridge-regression model using SRF embeddings as predictors. Predictions were generated with 5-fold nested cross-validation. Within each fold, embedding dimensions were standardized and the regularization strength was selected from $\alpha \in \{0.01, 0.1, 1, 10, 100\}$. Prediction quality was measured as the Spearman correlation between out-of-fold predictions and observed ratings.

\section{Author Contributions}

F.P.M and M.N.H conceived the project. F.P.M. and K.C.L. jointly conceived the method and developed the optimization algorithm, with additional input from F.P. F.P.M. implemented the software. F.P.M. and M.N.H. designed the experiments. F.P.M. performed the experiments. F.P.M. and K.C.L. wrote the original draft. M.N.H. supervised the research, provided critical revisions, and acquired funding. All authors reviewed and approved the final manuscript.

\section{Acknowledgments}

F.P.M. and M.N.H. acknowledge support from a Max Planck Research Group grant of the Max Planck Society awarded to M.N.H. M.N.H. acknowledges support from the ERC Starting Grant COREDIM (ERC-StG-2021-101039712), the Hessian Ministry of Higher Education, Science, Research and Art (LOEWE Start Professorship), and the Deutsche Forschungsgemeinschaft (German Research Foundation, DFG) under Germany’s Excellence Strategy (EXC 3066/1 “The Adaptive Mind”, Project No. 533717223). This study used the high-performance computing capabilities of the Raven and Cobra Linux clusters at the Max Planck Computing \& Data Facility (MPCDF), Garching, Germany (\url{https://www.mpcdf.mpg.de/services/supercomputing/}).
K.C.L. and F.P. were supported by the National Institute of Mental Health Intramural Research Program (ZICMH002968). The contributions of the NIH authors are considered Works of the United States Government. The findings and conclusions presented in this paper are those of the author(s) and do not necessarily reflect the views of the NIH or the U.S. Department of Health and Human Services. The funders had no role in the study design, data collection and analysis, decision to publish, or preparation of the manuscript.

We thank Paolo Papale for sharing the macaque electrophysiology data and Heiko Schütt and the Hebartlab for useful discussions.

\section{Software}

A Python package implementing our method is available at \href{https://github.com/florianmahner/pysrf}{https://github.com/florianmahner/pysrf}. A repository containing all analyses is available at \href{https://github.com/florianmahner/similarity-factorization}{https://github.com/florianmahner/similarity-factorization}.

\clearpage

\bibliography{bibliography}

\begin{thebibliography}{73}
\providecommand{\natexlab}[1]{#1}
\providecommand{\url}[1]{\texttt{#1}}
\expandafter\ifx\csname urlstyle\endcsname\relax
  \providecommand{\doi}[1]{doi: #1}\else
  \providecommand{\doi}{doi: \begingroup \urlstyle{rm}\Url}\fi

\bibitem[Allen et~al.(2022)Allen, St-Yves, Wu, Breedlove, Prince, Dowdle, Nau, Caron, Pestilli, Charest, Hutchinson, Naselaris, and Kay]{Allen2022}
E.~J. Allen, G.~St-Yves, Y.~Wu, J.~L. Breedlove, J.~S. Prince, L.~T. Dowdle, M.~Nau, B.~Caron, F.~Pestilli, I.~Charest, J.~B. Hutchinson, T.~Naselaris, and K.~Kay.
\newblock A massive {7T} {fMRI} dataset to bridge cognitive neuroscience and artificial intelligence.
\newblock \emph{Nat. Neurosci.}, 25:\penalty0 116--126, 2022.

\bibitem[Bao et~al.(2020)Bao, She, McGill, and Tsao]{Bao2020}
P.~Bao, L.~She, M.~McGill, and D.~Y. Tsao.
\newblock A map of object space in primate inferotemporal cortex.
\newblock \emph{Nature}, 583:\penalty0 103--108, 2020.

\bibitem[Bau et~al.(2017)Bau, Zhou, Khosla, Oliva, and Torralba]{Bau2017}
D.~Bau, B.~Zhou, A.~Khosla, A.~Oliva, and A.~Torralba.
\newblock Network dissection: quantifying interpretability of deep visual representations.
\newblock In \emph{Proc. IEEE/CVF Conference on Computer Vision and Pattern Recognition}, pages 3319--3327. IEEE, 2017.

\bibitem[Benjamini and Hochberg(1995)]{Benjamini1995}
Y.~Benjamini and Y.~Hochberg.
\newblock Controlling the false discovery rate: a practical and powerful approach to multiple testing.
\newblock \emph{J. R. Stat. Soc. Series B Stat. Methodol.}, 57\penalty0 (1):\penalty0 289--300, 1995.

\bibitem[Bockes et~al.(2025)Bockes, Hebart, and Lingnau]{Bockes2025}
A.~Bockes, M.~N. Hebart, and A.~Lingnau.
\newblock Revealing key dimensions underlying the recognition of dynamic human actions.
\newblock \emph{Commun. Psychol.}, 3:\penalty0 149, 2025.

\bibitem[Boyd et~al.(2011)Boyd, Parikh, Chu, Peleato, and Eckstein]{Boyd2011}
S.~Boyd, N.~Parikh, E.~Chu, B.~Peleato, and J.~Eckstein.
\newblock Distributed optimization and statistical learning via the alternating direction method of multipliers.
\newblock \emph{Found. Trends Mach. Learn.}, 3:\penalty0 1--122, 2011.

\bibitem[Bracci and Op~de Beeck(2023)]{Bracci2023}
S.~Bracci and H.~P. Op~de Beeck.
\newblock Understanding human object vision: A picture is worth a thousand representations.
\newblock \emph{Annu. Rev. Psychol.}, 74:\penalty0 113--135, 2023.

\bibitem[Bricken et~al.(2023)Bricken, Templeton, Batson, Chen, Jermyn, Conerly, Turner, Anil, Denison, Askell, Lasenby, Wu, Kravec, Schiefer, Maxwell, Joseph, Hatfield-Dodds, Tamkin, Nguyen, McLean, Burke, Hume, Carter, Henighan, and Olah]{Bricken2023}
T.~Bricken, A.~Templeton, J.~Batson, B.~Chen, A.~Jermyn, T.~Conerly, N.~Turner, C.~Anil, C.~Denison, A.~Askell, R.~Lasenby, Y.~Wu, S.~Kravec, N.~Schiefer, T.~Maxwell, N.~Joseph, Z.~Hatfield-Dodds, A.~Tamkin, K.~Nguyen, B.~McLean, J.~E. Burke, T.~Hume, S.~Carter, T.~Henighan, and C.~Olah.
\newblock Towards monosemanticity: Decomposing language models with dictionary learning.
\newblock Transformer Circuits Thread, 2023.

\bibitem[Cand{\`e}s and Recht(2009)]{Candes2009}
E.~J. Cand{\`e}s and B.~Recht.
\newblock Exact matrix completion via convex optimization.
\newblock \emph{Found. Comput. Math.}, 9\penalty0 (6):\penalty0 717--772, 2009.

\bibitem[Church and Hanks(1990)]{Church1990}
K.~W. Church and P.~Hanks.
\newblock Word association norms, mutual information, and lexicography.
\newblock \emph{Comput. Linguist.}, 16\penalty0 (1):\penalty0 22--29, 1990.

\bibitem[Contier et~al.(2024)Contier, Baker, and Hebart]{Contier2024}
O.~Contier, C.~I. Baker, and M.~N. Hebart.
\newblock Distributed representations of behaviour-derived object dimensions in the human visual system.
\newblock \emph{Nat. Hum. Behav.}, 8:\penalty0 2179--2193, 2024.

\bibitem[Costa et~al.(2025)Costa, Fel, Lubana, Tolooshams, and Ba]{Costa2025}
V.~Costa, T.~Fel, E.~S. Lubana, B.~Tolooshams, and D.~Ba.
\newblock From flat to hierarchical: extracting sparse representations with matching pursuit.
\newblock \emph{Adv. Neural Inf. Process. Syst.}, 38, 2025.

\bibitem[De~Deyne et~al.(2019)De~Deyne, Navarro, Perfors, Brysbaert, and Storms]{DeDeyne2019}
S.~De~Deyne, D.~J. Navarro, A.~Perfors, M.~Brysbaert, and G.~Storms.
\newblock The "small world of words" english word association norms for over 12,000 cue words.
\newblock \emph{Behav. Res. Methods}, 51:\penalty0 987--1006, 2019.

\bibitem[DiCarlo et~al.(2012)DiCarlo, Zoccolan, and Rust]{DiCarlo2012}
J.~J. DiCarlo, D.~Zoccolan, and N.~C. Rust.
\newblock How does the brain solve visual object recognition?
\newblock \emph{Neuron}, 73:\penalty0 415--434, 2012.

\bibitem[Ding et~al.(2005)Ding, He, and Simon]{Ding2005}
C.~Ding, X.~He, and H.~D. Simon.
\newblock On the equivalence of nonnegative matrix factorization and spectral clustering.
\newblock In \emph{Proc. SIAM International Conference on Data Mining}, pages 606--610, 2005.

\bibitem[Du et~al.(2025)Du, Fu, Wen, Sun, Peng, Wei, Gao, Wang, Zhang, Li, Qiu, Chang, and He]{Du2025}
C.~Du, K.~Fu, B.~Wen, Y.~Sun, J.~Peng, W.~Wei, Y.~Gao, S.~Wang, C.~Zhang, J.~Li, S.~Qiu, L.~Chang, and H.~He.
\newblock Human-like object concept representations emerge naturally in multimodal large language models.
\newblock \emph{Nat. Mach. Intell.}, 7:\penalty0 860--875, 2025.

\bibitem[Finkelstein et~al.(2002)Finkelstein, Gabrilovich, Matias, Rivlin, Solan, Wolfman, and Ruppin]{Finkelstein2002}
L.~Finkelstein, E.~Gabrilovich, Y.~Matias, E.~Rivlin, Z.~Solan, G.~Wolfman, and E.~Ruppin.
\newblock Placing search in context: The concept revisited.
\newblock \emph{ACM Trans. Inf. Syst.}, 20:\penalty0 116--131, 2002.

\bibitem[Fyshe et~al.(2015)Fyshe, Wehbe, Talukdar, Murphy, and Mitchell]{Fyshe2015}
A.~Fyshe, L.~Wehbe, P.~P. Talukdar, B.~Murphy, and T.~M. Mitchell.
\newblock A compositional and interpretable semantic space.
\newblock In \emph{Proc. Conference of the North American Chapter of the ACL}, pages 32--41. Association for Computational Linguistics, 2015.

\bibitem[G{\"a}rdenfors(2000)]{Gardenfors2000}
P.~G{\"a}rdenfors.
\newblock \emph{Conceptual spaces: the geometry of thought}.
\newblock MIT Press, Cambridge, MA, 2000.

\bibitem[Geirhos et~al.(2021)Geirhos, Narayanappa, Mitzkus, Thieringer, Bethge, Wichmann, and Brendel]{Geirhos2021}
R.~Geirhos, K.~Narayanappa, B.~Mitzkus, T.~Thieringer, M.~Bethge, F.~A. Wichmann, and W.~Brendel.
\newblock Partial success in closing the gap between human and machine vision.
\newblock \emph{Adv. Neural Inf. Process. Syst.}, 34:\penalty0 23885--23899, 2021.

\bibitem[Geirhos et~al.(2024)Geirhos, Zimmermann, Bilodeau, Brendel, and Kim]{Geirhos2024}
R.~Geirhos, R.~S. Zimmermann, B.~Bilodeau, W.~Brendel, and B.~Kim.
\newblock Don't trust your eyes: on the (un)reliability of feature visualizations.
\newblock \emph{Proc. Mach. Learn. Res.}, 235:\penalty0 15294--15330, 2024.

\bibitem[Gross(2011)]{Gross2011}
D.~Gross.
\newblock Recovering low-rank matrices from few coefficients in any basis.
\newblock \emph{IEEE Trans. Inf. Theory}, 57\penalty0 (3):\penalty0 1548--1566, 2011.

\bibitem[Hebart et~al.(2019)Hebart, Dickter, Kidder, Kwok, Corriveau, Van~Wicklin, and Baker]{Hebart2019}
M.~N. Hebart, A.~H. Dickter, A.~Kidder, W.~Y. Kwok, A.~Corriveau, C.~Van~Wicklin, and C.~I. Baker.
\newblock {THINGS}: A database of 1,854 object concepts and more than 26,000 naturalistic object images.
\newblock \emph{PLoS One}, 14:\penalty0 e0223792, 2019.

\bibitem[Hebart et~al.(2020)Hebart, Zheng, Pereira, and Baker]{Hebart2020}
M.~N. Hebart, C.~Y. Zheng, F.~Pereira, and C.~I. Baker.
\newblock Revealing the multidimensional mental representations of natural objects underlying human similarity judgements.
\newblock \emph{Nat. Hum. Behav.}, 4:\penalty0 1173--1185, 2020.

\bibitem[Hebart et~al.(2023)Hebart, Contier, Teichmann, Rockter, Zheng, Kidder, Corriveau, Vaziri-Pashkam, and Baker]{Hebart2023}
M.~N. Hebart, O.~Contier, L.~Teichmann, A.~H. Rockter, C.~Y. Zheng, A.~Kidder, A.~Corriveau, M.~Vaziri-Pashkam, and C.~I. Baker.
\newblock {THINGS}-data, a multimodal collection of large-scale datasets for investigating object representations in human brain and behavior.
\newblock \emph{eLife}, 12:\penalty0 e82580, 2023.

\bibitem[Hill et~al.(2015)Hill, Reichart, and Korhonen]{Hill2015}
F.~Hill, R.~Reichart, and A.~Korhonen.
\newblock {SimLex}-999: Evaluating semantic models with (genuine) similarity estimation.
\newblock \emph{Comput. Linguist.}, 41:\penalty0 665--695, 2015.

\bibitem[Huben et~al.(2024)Huben, Cunningham, Smith, Ewart, and Sharkey]{Cunningham2024}
R.~Huben, H.~Cunningham, L.~R. Smith, A.~Ewart, and L.~Sharkey.
\newblock Sparse autoencoders find highly interpretable features in language models.
\newblock In \emph{Proc. International Conference on Learning Representations}, 2024.

\bibitem[Jagadeesh and Gardner(2022)]{Jagadeesh2022}
A.~V. Jagadeesh and J.~L. Gardner.
\newblock Texture-like representation of objects in human visual cortex.
\newblock \emph{Proc. Natl. Acad. Sci. U. S. A.}, 119:\penalty0 e2115302119, 2022.

\bibitem[Jolliffe(1986)]{Jolliffe1986}
I.~T. Jolliffe.
\newblock Principal components in regression analysis.
\newblock In \emph{Principal Component Analysis}, pages 129--155. Springer, 1986.

\bibitem[Josephs et~al.(2023)Josephs, Hebart, and Konkle]{Josephs2023}
E.~L. Josephs, M.~N. Hebart, and T.~Konkle.
\newblock Dimensions underlying human understanding of the reachable world.
\newblock \emph{Cognition}, 234:\penalty0 105368, 2023.

\bibitem[Khosla et~al.(2022)Khosla, Ratan~Murty, and Kanwisher]{Khosla2022}
M.~Khosla, N.~A. Ratan~Murty, and N.~Kanwisher.
\newblock A highly selective response to food in human visual cortex revealed by hypothesis-free voxel decomposition.
\newblock \emph{Curr. Biol.}, 32\penalty0 (19):\penalty0 4159--4171.e9, 2022.

\bibitem[Khrulkov et~al.(2020)Khrulkov, Mirvakhabova, Ustinova, Oseledets, and Lempitsky]{Khrulkov2020}
V.~Khrulkov, L.~Mirvakhabova, E.~Ustinova, I.~Oseledets, and V.~Lempitsky.
\newblock Hyperbolic image embeddings.
\newblock In \emph{Proc. IEEE/CVF Conference on Computer Vision and Pattern Recognition}, pages 6418--6428. IEEE, 2020.

\bibitem[Klindt et~al.(2025)Klindt, O'Neill, Reizinger, Maurer, and Miolane]{Klindt2025}
D.~Klindt, C.~O'Neill, P.~Reizinger, H.~Maurer, and N.~Miolane.
\newblock From superposition to sparse codes: interpretable representations in neural networks.
\newblock \emph{Preprint at arXiv}, 2503.01824, 2025.

\bibitem[Konkle and Caramazza(2013)]{Konkle2013}
T.~Konkle and A.~Caramazza.
\newblock Tripartite organization of the ventral stream by animacy and object size.
\newblock \emph{J. Neurosci.}, 33\penalty0 (25):\penalty0 10235--10242, 2013.

\bibitem[Kornblith et~al.(2019)Kornblith, Norouzi, Lee, and Hinton]{Kornblith2019}
S.~Kornblith, M.~Norouzi, H.~Lee, and G.~Hinton.
\newblock Similarity of neural network representations revisited.
\newblock \emph{Proc. Mach. Learn. Res.}, 97:\penalty0 3519--3529, 2019.

\bibitem[Kriegeskorte and Kievit(2013)]{Kriegeskorte2013}
N.~Kriegeskorte and R.~A. Kievit.
\newblock Representational geometry: integrating cognition, computation, and the brain.
\newblock \emph{Trends Cogn. Sci.}, 17:\penalty0 401--412, 2013.

\bibitem[Kriegeskorte and Wei(2021)]{Kriegeskorte2021}
N.~Kriegeskorte and X.-X. Wei.
\newblock Neural tuning and representational geometry.
\newblock \emph{Nat. Rev. Neurosci.}, 22\penalty0 (11):\penalty0 703--718, 2021.

\bibitem[Kriegeskorte et~al.(2008)Kriegeskorte, Mur, and Bandettini]{Kriegeskorte2008-01}
N.~Kriegeskorte, M.~Mur, and P.~Bandettini.
\newblock Representational similarity analysis - connecting the branches of systems neuroscience.
\newblock \emph{Front. Syst. Neurosci.}, 2:\penalty0 4, 2008.

\bibitem[Kuang et~al.(2012)Kuang, Ding, and Park]{Kuang2012}
D.~Kuang, C.~Ding, and H.~Park.
\newblock Symmetric nonnegative matrix factorization for graph clustering.
\newblock In \emph{Proc. SIAM International Conference on Data Mining}, pages 106--117. SIAM, 2012.

\bibitem[Lapuschkin et~al.(2019)Lapuschkin, W{\"a}ldchen, Binder, Montavon, Samek, and M{\"u}ller]{Lapuschkin2019}
S.~Lapuschkin, S.~W{\"a}ldchen, A.~Binder, G.~Montavon, W.~Samek, and K.-R. M{\"u}ller.
\newblock Unmasking {Clever Hans} predictors and assessing what machines really learn.
\newblock \emph{Nat. Commun.}, 10:\penalty0 1096, 2019.

\bibitem[Laub and M{\"u}ller(2004)]{Laub2004}
J.~Laub and K.-R. M{\"u}ller.
\newblock Feature discovery in non-metric pairwise data.
\newblock \emph{J. Mach. Learn. Res.}, 5:\penalty0 801--818, 2004.

\bibitem[Laub et~al.(2006)Laub, M{\"u}ller, Wichmann, and Macke]{Laub2006}
J.~Laub, K.-R. M{\"u}ller, F.~A. Wichmann, and J.~H. Macke.
\newblock Inducing metric violations in human similarity judgements.
\newblock \emph{Adv. Neural Inf. Process. Syst.}, 19:\penalty0 777--784, 2006.

\bibitem[Lee and Seung(1999)]{Lee1999}
D.~D. Lee and H.~S. Seung.
\newblock Learning the parts of objects by non-negative matrix factorization.
\newblock \emph{Nature}, 401:\penalty0 788--791, 1999.

\bibitem[Mahner et~al.(2025)Mahner, Muttenthaler, Güçlü, and Hebart]{Mahner2025}
F.~P. Mahner, L.~Muttenthaler, U.~Güçlü, and M.~N. Hebart.
\newblock Dimensions underlying the representational alignment of deep neural networks with humans.
\newblock \emph{Nat. Mach. Intell.}, 7:\penalty0 848--859, 2025.

\bibitem[Mahner et~al.(2026)Mahner, Roth, Lam, Bonner, Pereira, and Hebart]{Mahner2026}
F.~P. Mahner, J.~Roth, K.~C. Lam, M.~F. Bonner, F.~Pereira, and M.~N. Hebart.
\newblock Characterizing universal object representations across vision models.
\newblock \emph{Preprint at arXiv}, 2605.13675, 2026.

\bibitem[Majaj et~al.(2015)Majaj, Hong, Solomon, and DiCarlo]{Majaj2015}
N.~J. Majaj, H.~Hong, E.~A. Solomon, and J.~J. DiCarlo.
\newblock Simple learned weighted sums of inferior temporal neuronal firing rates accurately predict human core object recognition performance.
\newblock \emph{J. Neurosci.}, 35\penalty0 (39):\penalty0 13402--13418, 2015.

\bibitem[Mantel(1967)]{Mantel1967}
N.~Mantel.
\newblock The detection of disease clustering and a generalized regression approach.
\newblock \emph{Cancer Res.}, 27\penalty0 (2):\penalty0 209--220, 1967.

\bibitem[Mur et~al.(2013)Mur, Meys, Bodurka, Goebel, Bandettini, and Kriegeskorte]{Mur2013}
M.~Mur, M.~Meys, J.~Bodurka, R.~Goebel, P.~A. Bandettini, and N.~Kriegeskorte.
\newblock Human object-similarity judgments reflect and transcend the primate-{IT} object representation.
\newblock \emph{Front. Psychol.}, 4:\penalty0 128, 2013.

\bibitem[Muttenthaler et~al.(2025)Muttenthaler, Greff, Born, Spitzer, Kornblith, Mozer, Müller, Unterthiner, and Lampinen]{Muttenthaler2025}
L.~Muttenthaler, K.~Greff, F.~Born, B.~Spitzer, S.~Kornblith, M.~C. Mozer, K.-R. Müller, T.~Unterthiner, and A.~K. Lampinen.
\newblock Aligning machine and human visual representations across abstraction levels.
\newblock \emph{Nature}, 647:\penalty0 349--355, 2025.

\bibitem[Naselaris et~al.(2011)Naselaris, Kay, Nishimoto, and Gallant]{Naselaris2011}
T.~Naselaris, K.~N. Kay, S.~Nishimoto, and J.~L. Gallant.
\newblock Encoding and decoding in {fMRI}.
\newblock \emph{Neuroimage}, 56:\penalty0 400--410, 2011.

\bibitem[Nickel and Kiela(2017)]{Nickel2017}
M.~Nickel and D.~Kiela.
\newblock Poincaré embeddings for learning hierarchical representations.
\newblock \emph{Adv. Neural Inf. Process. Syst.}, 30:\penalty0 6338--6347, 2017.

\bibitem[Nosofsky(1986)]{Nosofsky1986}
R.~M. Nosofsky.
\newblock Attention, similarity, and the identification-categorization relationship.
\newblock \emph{J. Exp. Psychol. Gen.}, 115:\penalty0 39--61, 1986.

\bibitem[Olah et~al.(2017)Olah, Mordvintsev, and Schubert]{Olah2017}
C.~Olah, A.~Mordvintsev, and L.~Schubert.
\newblock Feature visualization.
\newblock \emph{Distill}, 2:\penalty0 e7, 2017.

\bibitem[Papale et~al.(2025)Papale, Wang, Self, and Roelfsema]{Papale2025}
P.~Papale, F.~Wang, M.~W. Self, and P.~R. Roelfsema.
\newblock An extensive dataset of spiking activity to reveal the syntax of the ventral stream.
\newblock \emph{Neuron}, 113:\penalty0 539--553.e5, 2025.

\bibitem[Peterson et~al.(2018)Peterson, Abbott, and Griffiths]{Peterson2018}
J.~C. Peterson, J.~T. Abbott, and T.~L. Griffiths.
\newblock Evaluating (and improving) the correspondence between deep neural networks and human representations.
\newblock \emph{Cogn. Sci.}, 42:\penalty0 2648--2669, 2018.

\bibitem[Radford et~al.(2021)Radford, Kim, Hallacy, Ramesh, Goh, Agarwal, Sastry, Askell, Mishkin, Clark, Krueger, and Sutskever]{Radford2021}
A.~Radford, J.~W. Kim, C.~Hallacy, A.~Ramesh, G.~Goh, S.~Agarwal, G.~Sastry, A.~Askell, P.~Mishkin, J.~Clark, G.~Krueger, and I.~Sutskever.
\newblock Learning transferable visual models from natural language supervision.
\newblock \emph{Proc. Mach. Learn. Res.}, 139:\penalty0 8748--8763, 2021.

\bibitem[Roads and Love(2024)]{Roads2024-01}
B.~D. Roads and B.~C. Love.
\newblock Modeling similarity and psychological space.
\newblock \emph{Annu. Rev. Psychol.}, 75\penalty0 (1):\penalty0 215--240, 2024.

\bibitem[Samek et~al.(2021)Samek, Montavon, Lapuschkin, Anders, and Müller]{Samek2021}
W.~Samek, G.~Montavon, S.~Lapuschkin, C.~J. Anders, and K.-R. Müller.
\newblock Explaining deep neural networks and beyond: a review of methods and applications.
\newblock \emph{Proc. IEEE}, 109\penalty0 (3):\penalty0 247--278, 2021.

\bibitem[Schölkopf and Smola(2002)]{Scholkopf2002}
B.~Schölkopf and A.~J. Smola.
\newblock \emph{Learning with kernels: Support vector machines, regularization, optimization, and beyond}.
\newblock MIT Press, 2002.

\bibitem[Schütt et~al.(2023)Schütt, Kipnis, Diedrichsen, and Kriegeskorte]{Schutt2023}
H.~H. Schütt, A.~D. Kipnis, J.~Diedrichsen, and N.~Kriegeskorte.
\newblock Statistical inference on representational geometries.
\newblock \emph{eLife}, 12:\penalty0 e82566, 2023.

\bibitem[Scott et~al.(2019)Scott, Keitel, Becirspahic, Yao, and Sereno]{Scott2019}
G.~G. Scott, A.~Keitel, M.~Becirspahic, B.~Yao, and S.~C. Sereno.
\newblock The glasgow norms: Ratings of 5,500 words on nine scales.
\newblock \emph{Behav. Res. Methods}, 51:\penalty0 1258--1270, 2019.

\bibitem[Shepard(1957)]{Shepard1957}
R.~N. Shepard.
\newblock Stimulus and response generalization: a stochastic model relating generalization to distance in psychological space.
\newblock \emph{Psychometrika}, 22:\penalty0 325--345, 1957.

\bibitem[Shepard(1962)]{Shepard1962}
R.~N. Shepard.
\newblock The analysis of proximities: Multidimensional scaling with an unknown distance function. {I}.
\newblock \emph{Psychometrika}, 27:\penalty0 125--140, 1962.

\bibitem[Shi et~al.(2017)Shi, Sun, Lu, Hong, and Razaviyayn]{Shi2017}
Q.~Shi, H.~Sun, S.~Lu, M.~Hong, and M.~Razaviyayn.
\newblock Inexact block coordinate descent methods for symmetric nonnegative matrix factorization.
\newblock \emph{IEEE Trans. Signal Process.}, 65:\penalty0 5995--6008, 2017.

\bibitem[Sorscher et~al.(2022)Sorscher, Ganguli, and Sompolinsky]{Sorscher2022}
B.~Sorscher, S.~Ganguli, and H.~Sompolinsky.
\newblock Neural representational geometry underlies few-shot concept learning.
\newblock \emph{Proc. Natl. Acad. Sci. U. S. A.}, 119\penalty0 (43):\penalty0 e2200800119, 2022.

\bibitem[Stoinski et~al.(2024)Stoinski, Perkuhn, and Hebart]{Stoinski2024}
L.~M. Stoinski, J.~Perkuhn, and M.~N. Hebart.
\newblock {THINGSplus}: new norms and metadata for the {THINGS} database of 1854 object concepts and 26,107 natural object images.
\newblock \emph{Behav. Res. Methods}, 56:\penalty0 1583--1603, 2024.

\bibitem[Sucholutsky et~al.(2025)Sucholutsky, Muttenthaler, Weller, Peng, Bobu, Kim, Love, Cueva, Grant, Groen, Achterberg, Tenenbaum, Collins, Hermann, Oktar, Greff, Hebart, Cloos, Kriegeskorte, Jacoby, Zhang, Marjieh, Geirhos, Chen, Kornblith, Rane, Konkle, O'Connell, Unterthiner, Lampinen, Müller, Toneva, and Griffiths]{Sucholutsky2025}
I.~Sucholutsky, L.~Muttenthaler, A.~Weller, A.~Peng, A.~Bobu, B.~Kim, B.~C. Love, C.~J. Cueva, E.~Grant, I.~Groen, J.~Achterberg, J.~B. Tenenbaum, K.~M. Collins, K.~L. Hermann, K.~Oktar, K.~Greff, M.~N. Hebart, N.~Cloos, N.~Kriegeskorte, N.~Jacoby, Q.~Zhang, R.~Marjieh, R.~Geirhos, S.~Chen, S.~Kornblith, S.~Rane, T.~Konkle, T.~P. O'Connell, T.~Unterthiner, A.~K. Lampinen, K.-R. Müller, M.~Toneva, and T.~L. Griffiths.
\newblock Getting aligned on representational alignment.
\newblock \emph{Trans. Mach. Learn. Res.}, 2025.

\bibitem[Teichmann et~al.(2026)Teichmann, Hebart, and Baker]{Teichmann2026}
L.~Teichmann, M.~N. Hebart, and C.~I. Baker.
\newblock Dynamic representation of multidimensional object properties in the human brain.
\newblock \emph{J. Neurosci.}, 46:\penalty0 e1057252026, 2026.

\bibitem[Thasarathan et~al.(2025)Thasarathan, Forsyth, Fel, Kowal, and Derpanis]{Thasarathan2025}
H.~Thasarathan, J.~Forsyth, T.~Fel, M.~Kowal, and K.~G. Derpanis.
\newblock Universal sparse autoencoders: Interpretable cross-model concept alignment.
\newblock \emph{Proc. Mach. Learn. Res.}, 267:\penalty0 59304--59325, 2025.

\bibitem[Torgerson(1952)]{Torgerson1952}
W.~S. Torgerson.
\newblock Multidimensional scaling: {I}. theory and method.
\newblock \emph{Psychometrika}, 17:\penalty0 401--419, 1952.

\bibitem[van Bree and Hebart(2026)]{vanBree2026}
S.~van Bree and M.~N. Hebart.
\newblock Shared and distinct object spaces in human and macaque inferotemporal cortex.
\newblock \emph{Preprint at bioRxiv}, 2026.05.20.724014, 2026.

\bibitem[van~der Maaten and Hinton(2008)]{Maaten2008}
L.~van~der Maaten and G.~Hinton.
\newblock Visualizing data using t-{SNE}.
\newblock \emph{J. Mach. Learn. Res.}, 9:\penalty0 2579--2605, 2008.

\bibitem[Zeng and Gallant(2025)]{Zeng2025}
A.~Zeng and J.~L. Gallant.
\newblock Disentangling superpositions: interpretable brain encoding model with sparse concept atoms.
\newblock \emph{Preprint at bioRxiv}, 2025.11.29.691321, 2025.

\end{thebibliography}
\bibliographystyle{plainnat}

\clearpage

\appendix
\setcounter{figure}{0}
\renewcommand{\thefigure}{S\arabic{figure}}

\section{Uncovering behavioral dimensions from trial-based data}

\begin{figure}[h]
  \centering
  \includegraphics[width=1.0\linewidth]{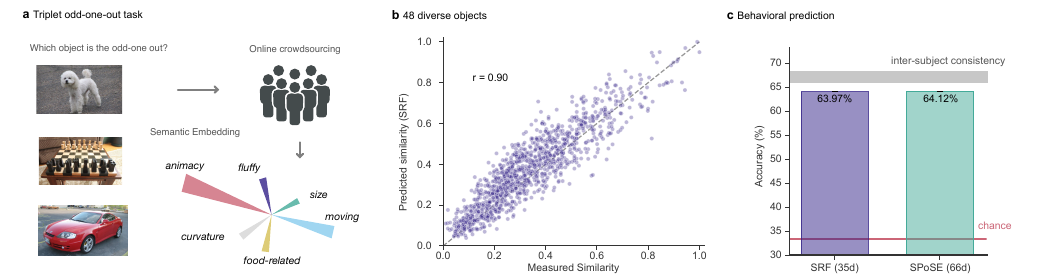}
  \caption{\textbf{SRF recovers behavioral similarity structure from sparse triplet data.}
  \textbf{a},~In the THINGS odd-one-out task \citep{Hebart2019}, participants view three object images and select the least similar one. These triplet judgments are aggregated into pairwise similarities with Laplace smoothing and factorized by SRF into interpretable dimensions.
  \textbf{b},~Predicted pairwise similarities from SRF correlate strongly with independent human similarity ratings collected for a separate set of 48 objects.
  \textbf{c},~Held-out triplet prediction accuracy. SRF at the cross-validated rank approaches the published SPoSE embedding while using roughly half as many dimensions, and reaches 89.23\% of the chance-corrected behavioral noise ceiling.}
  \label{fig:things}
\end{figure}

Here we ask whether SRF can match a state-of-the-art neural-network approach for discovering interpretable dimensions from large-scale behavioral judgments. We answered this on the THINGS odd-one-out dataset \citep{Hebart2023}, which contains 4.7 million triplet odd-one-out judgments across 1{,}854 object concepts. In each trial, participants selected the image that least fit with two others (Fig.~\ref{fig:things}a). We compare SRF to the neural network model SPoSE \citep{Hebart2020}, a very influential model that learns interpretable dimensions directly from triplet trials and has been used to study brain organization \citep{Contier2024,Teichmann2026} and extended to other domains \citep{Josephs2023,Bockes2025}. Our model, in contrast, receives only the pairwise similarity matrix aggregated from the trial-level responses (see Methods).

Cross-validation selected 35 dimensions, compared with 66 for the published SPoSE embedding \citep{Hebart2023}. While the dimensionality was notably lower, it remained open to what degree this more compact representation preserved the ability to predict independent behavioral data. To evaluate the predictive performance of this embedding, we carried out two tests. First, we reconstructed pairwise similarities from the learned similarity embedding and correlated them with independent human similarity ratings collected for a separate set of 48 objects not used during estimation \citep{Hebart2020}. The correlation was high ($r = 0.90$, Fig.~\ref{fig:things}b), comparable to the result reported in the original SPoSE paper ($r = 0.90$) while using substantially fewer dimensions. Second, we tested whether the embedding could predict held-out triplet odd-one-out judgments. The embedding achieved $63.97\% \pm 0.02\%$ accuracy, approaching SPoSE ($64.12\%$, Fig.~\ref{fig:things}c). The inter-subject consistency, defined as the consistency with which humans agree on repeated triplets, was $67.67\% \pm 1.08\%$. Relative to the three-choice chance level of $33.33\%$, the model reached $89.23\%$ of this behavioral ceiling.

Together, these results demonstrate that our model recovers behaviorally relevant dimensions on par with a neural network optimized on a task-specific triplet objective. This broadens the kinds of behavioral data from which such dimensions can be estimated to settings where researchers collect only pairwise or partially sampled similarity judgments.

\section{Optimization and convergence of SRF}\label{sec:srf-optim}

\label{supp:math_derivation}

In the following, we provide the formal derivation of the Similarity-Based Representation Factorization (SRF) algorithm. Throughout the following discussion, we assume $S \in \mathbb{R}^{n \times n}_{\geq 0}$ is a non-negative, symmetric similarity matrix (for example an RBF-kernel Gram matrix). Denote $\mathcal{W} := \left\{ W \in \mathbb{R}^{n \times r} : W \geq 0 \right\}$. We consider representing $S$ with a symmetric, low-rank factorization of the form:

\begin{equation}
    S \approx WW^{\top}, \quad WW^{\top} \in [ \min(S), \max(S)].
\end{equation}

In particular, $W$ is constrained only by non-negativity and by the bound on the reconstruction. In addition, $S$ may be only partially observed, with some entries $S_{ij}$ missing. Here we assume that missingness is completely at random. Missing entries are handled using a symmetric binary mask $\mathcal{M} \in \{ 0, 1\}^{n\times n}$, where $\mathcal{M}_{ij}=\mathcal{M}_{ji}=1$ if $S_{ij}$ is observed, and $0$ otherwise. We assume the diagonal is observed, so $\mathcal{M}_{ii}=1$. This gives the following least-squares problem

\begin{equation}
    \min_{W \in \mathcal{W}} \frac{1}{2} \left\Vert \mathcal{M} \odot \left(S - WW^{\top}  \right) \right\Vert^2_F, \quad \text{such that} \ \min(S) \leq [WW^{\top}]_{ab} \leq \max(S) \ \ \forall (a,b)
\end{equation}

The constrained factorization can be optimized via the Alternating Direction Method of Multipliers (ADMM) framework \citep{Boyd2011}. The augmented Lagrangian is

\begin{equation}
    \mathcal{L}_{\rho}(W,Z,\alpha_Z) = \tfrac{1}{2}\|\mathcal{M}\odot( S - Z )\|_F^2
+ \mathcal{I}_{\mathcal{W}}(W)
+ \langle\alpha_Z,\,Z - W W^\top\rangle
+ \tfrac{\rho}{2}\|Z - W W^\top\|_F^2 + \mathcal{I}_{\mathcal{Z}}(Z).
\end{equation}

The auxiliary $Z$ inherits the box $\mathcal{Z} = \left\{ Z \in \mathbb{R}^{n \times n}_{\geq 0} : Z_{ab} \in [ \min(S), \max(S)] \right\}$ from the data range, and $\mathcal{I}_{\mathcal{S}}$ denotes the convex indicator of the set $\mathcal{S}$. The block updates are

\begin{align}
W^{(i+1)} &= \underset{W \in \mathcal{W}}{\arg\min}\
\frac{\rho}{2}\bigl\| Z^{(i)} - W W^{\top} + \tfrac{1}{\rho}\alpha_Z^{(i)}\bigr\|_F^2 \label{eqt:sym_sub_W} \\
Z^{(i+1)} &= \underset{Z \in \mathcal{Z}}{\arg\min}\
\tfrac{1}{2}\|\mathcal{M}\odot(S-Z)\|_F^2
+ \tfrac{\rho}{2}\bigl\| Z - W^{(i+1)}(W^{(i+1)})^{\top} + \tfrac{1}{\rho}\alpha_Z^{(i)}\bigr\|_F^2 \label{eqt:sym_sub_Z} \\
\alpha_Z^{(i+1)} &\leftarrow \alpha_Z^{(i)} + \rho\bigl( Z^{(i+1)} - W^{(i+1)}(W^{(i+1)})^{\top}\bigr). \nonumber
\end{align}

Throughout we use the shorthand
\begin{equation}
R^{(i)} := W^{(i)}(W^{(i)})^{\top},\qquad
\widetilde Z^{(i)} := Z^{(i)} + \rho^{-1}\alpha_Z^{(i)},\qquad
h_i(W) := \tfrac{\rho}{2}\bigl\|\widetilde Z^{(i)} - W W^{\top}\bigr\|_F^2.
\end{equation}

The $Z$-update has the entry-wise closed form
\begin{equation}
Z^{(i+1)}_{ab}
= \mathrm{Proj}_{\mathcal{Z}}\!\Biggl(
  \frac{\mathcal{M}_{ab}\,S_{ab} + \rho\,R^{(i+1)}_{ab} - \alpha_{Z,ab}^{(i)}}{\rho + \mathcal{M}_{ab}}
\Biggr).
\label{eqt:sym_Z_closedform}
\end{equation}

The $W$-subproblem is non-convex because the objective is quartic in the entries of $W$. We therefore solve it inexactly by the sBSUM scheme of \citet{Shi2017} (see Supplementary Information~\ref{sec:bsum}).

\subsection{Convergence and the lower bound on \texorpdfstring{$\rho$}{rho}}

We now prove a non-increasing Lagrangian property for the ADMM scheme. The dual-variable difference can be controlled tightly through the Karush--Kuhn--Tucker (KKT) condition of the $Z$-update.


\begin{lemma}[KKT identity]
\label{lem:sym_KKT_alpha}
Suppose at iteration $i$ every entry $(a,b)$ of $Z^{(i+1)}$ is the
\emph{interior} solution of \eqref{eqt:sym_sub_Z} (the projection in \eqref{eqt:sym_Z_closedform} is inactive). Then
\begin{equation}
\alpha_Z^{(i+1)} \;=\; \mathcal{M} \odot (S - Z^{(i+1)}).
\label{eqt:sym_KKT_alpha}
\end{equation}
\end{lemma}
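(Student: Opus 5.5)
The plan is to write down the first-order optimality condition of the $Z$-subproblem \eqref{eqt:sym_sub_Z} entrywise and then substitute it into the dual update. The $Z$-subproblem is separable across entries and strongly convex in each $Z_{ab}$, with curvature $\mathcal{M}_{ab}+\rho>0$. Under the interiority assumption the indicator $\mathcal{I}_{\mathcal{Z}}$ contributes nothing to the subdifferential, since the normal cone at an interior point is $\{0\}$. Hence the minimizer is characterized by a vanishing gradient:
\begin{equation*}
-\mathcal{M}_{ab}\bigl(S_{ab}-Z^{(i+1)}_{ab}\bigr) + \rho\Bigl(Z^{(i+1)}_{ab} - R^{(i+1)}_{ab}\Bigr) + \alpha^{(i)}_{Z,ab} = 0 .
\end{equation*}
Here I use that $\mathcal{M}_{ab}\in\{0,1\}$, so $\mathcal{M}_{ab}^2=\mathcal{M}_{ab}$. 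This is the derivative of $\tfrac12\mathcal{M}_{ab}(S_{ab}-Z_{ab})^2+\tfrac{\rho}{2}(Z_{ab}-R^{(i+1)}_{ab}+\rho^{-1}\alpha^{(i)}_{Z,ab})^2$. Equivalently, it is exactly the unprojected argument of \eqref{eqt:sym_Z_closedform}, which gives a quick consistency check.

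Next I rearrange this identity as $\alpha^{(i)}_{Z,ab} + \rho\bigl(Z^{(i+1)}_{ab}-R^{(i+1)}_{ab}\bigr) = \mathcal{M}_{ab}\bigl(S_{ab}-Z^{(i+1)}_{ab}\bigr)$. The left-hand side is precisely the $(a,b)$ entry of the dual update $\alpha_Z^{(i+1)} = \alpha_Z^{(i)}+\rho(Z^{(i+1)}-R^{(i+1)})$. Collecting over all $(a,b)$ gives \eqref{eqt:sym_KKT_alpha}.

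The only step needing care is justifying that interiority removes the projection term. This means checking that the closed form \eqref{eqt:sym_Z_closedform}, with projection inactive, coincides with the stationary point. That follows because projecting a scalar strongly convex quadratic's minimizer onto an interval gives the constrained minimizer, and if the unconstrained minimizer already lies in the box, it is the solution and satisfies the unconstrained stationarity equation. I will also note that the $W$-update being inexact does not matter, since the argument only uses the $Z$-step and the dual step with whatever $R^{(i+1)}$ was produced.
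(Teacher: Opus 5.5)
Your proof is correct and follows the paper's argument: write the entrywise stationarity condition of the $Z$-subproblem, which holds because the projection is inactive, and substitute it into the dual update $\alpha_Z^{(i+1)} = \alpha_Z^{(i)} + \rho\bigl(Z^{(i+1)} - R^{(i+1)}\bigr)$. Your remarks on the normal cone and on consistency with the closed-form $Z$-update add rigor but do not change the route.
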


\begin{proof}
For each entry $(a,b)$, the unconstrained first-order condition for $Z_{ab}^{(i+1)}$ is $\mathcal{M}_{ab}(Z_{ab}^{(i+1)} - S_{ab}) + \rho(Z_{ab}^{(i+1)} - R_{ab}^{(i+1)}) + \alpha_{Z,ab}^{(i)} = 0$. Combining with the dual update $\alpha_{Z,ab}^{(i+1)} = \alpha_{Z,ab}^{(i)} + \rho(Z_{ab}^{(i+1)} - R_{ab}^{(i+1)})$ gives $\alpha_{Z,ab}^{(i+1)} = \mathcal{M}_{ab} \cdot (S_{ab} - Z_{ab}^{(i+1)})$.
\end{proof}

Applying \eqref{eqt:sym_KKT_alpha} at iterations $(i)$ and $(i+1)$ gives

\begin{equation}
\alpha_Z^{(i+1)} - \alpha_Z^{(i)}
\;=\; \mathcal{M} \odot (Z^{(i)} - Z^{(i+1)})
\;\;\Longrightarrow\;\;
\|\alpha_Z^{(i+1)} - \alpha_Z^{(i)}\|_F^2 \;\leq\; \|\Delta Z^{(i)}\|_F^2,
\label{eqt:sym_alpha_tight}
\end{equation}

Thus, if the projection is inactive at iterations $i$ and $i+1$ the bound is independent of $\Delta R^{(i)} = \Delta(W W^\top)^{(i)}$. In this case, the change in the dual variable $\alpha$ is controlled entirely by the change in $\Delta Z^{(i)}$ on the observed entries. When the projection is active, the bound on $\|\alpha_Z^{(i+1)} - \alpha_Z^{(i)}\|_F^2$ includes an additional term depending on $\Vert W^{(i)}\Vert_{\mathrm{op}}$, which is finite. We address this below.

\begin{theorem}[Non-increasing Lagrangian]
\label{thm:sym_decreasing_tight}
Suppose that
\begin{enumerate}
\item[(i)] the $W$-inner solver of \eqref{eqt:sym_sub_W} satisfies the \emph{descent property}
\begin{equation}
h_i\bigl(W^{(i+1)}\bigr) \;\leq\; h_i\bigl(W^{(i)}\bigr);
\label{eqt:descent_only}
\end{equation}
\item[(ii)] at every iteration, the projection onto $\mathcal{Z}$ in the $Z$-update \eqref{eqt:sym_Z_closedform} is inactive at every entry, so that Lemma~\ref{lem:sym_KKT_alpha} applies.
\end{enumerate}
Then for any $\rho \geq \sqrt{2}$ and every iteration $i$,
\begin{equation}
\mathcal{L}_{\rho}\!\bigl(W^{(i+1)},Z^{(i+1)},\alpha_Z^{(i+1)}\bigr)
\;\le\; \mathcal{L}_{\rho}\!\bigl(W^{(i)},Z^{(i)},\alpha_Z^{(i)}\bigr).
\label{eqt:sym_descent_thm}
\end{equation}
\end{theorem}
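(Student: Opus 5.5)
The plan is the standard ADMM descent argument. I split one full sweep into its three block updates, $W$, then $Z$, then $\alpha_Z$, and bound the change in $\mathcal{L}_\rho$ caused by each. The $W$- and $Z$-steps will each be shown not to increase the Lagrangian, with the $Z$-step producing a quantified decrease. The dual step is the only one that can increase it, and Lemma~\ref{lem:sym_KKT_alpha} will control that increase.

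\emph{Step 1 ($W$-update).} Completing the square in the two $W$-dependent terms gives
\[
\langle\alpha_Z^{(i)},Z^{(i)}-WW^\top\rangle+\tfrac{\rho}{2}\|Z^{(i)}-WW^\top\|_F^2
= h_i(W)-\tfrac{1}{2\rho}\|\alpha_Z^{(i)}\|_F^2 .
\]
So, on $\mathcal{W}$, the map $W\mapsto \mathcal{L}_\rho(W,Z^{(i)},\alpha_Z^{(i)})$ equals $h_i$ plus a constant. The descent property \eqref{eqt:descent_only} then gives $\mathcal{L}_\rho(W^{(i+1)},Z^{(i)},\alpha_Z^{(i)})\le \mathcal{L}_\rho(W^{(i)},Z^{(i)},\alpha_Z^{(i)})$. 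Feasibility $W^{(i+1)}\in\mathcal{W}$ keeps the indicator term at zero.

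\emph{Step 2 ($Z$-update).} With $W^{(i+1)}$ and $\alpha_Z^{(i)}$ fixed, $\mathcal{L}_\rho$ as a function of $Z$ is exactly the objective in \eqref{eqt:sym_sub_Z} plus a constant. Entrywise this objective is a quadratic with curvature $\rho+\mathcal{M}_{ab}\ge\rho$, so it is $\rho$-strongly convex. Since $Z^{(i+1)}$ is its exact minimizer over the convex set $\mathcal{Z}$, the standard strong-convexity inequality at a minimizer yields
\[
\mathcal{L}_\rho(W^{(i+1)},Z^{(i+1)},\alpha_Z^{(i)})\le \mathcal{L}_\rho(W^{(i+1)},Z^{(i)},\alpha_Z^{(i)})-\tfrac{\rho}{2}\|\Delta Z^{(i)}\|_F^2,
\]
where $\Delta Z^{(i)}=Z^{(i+1)}-Z^{(i)}$.

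\emph{Step 3 ($\alpha_Z$-update).} The Lagrangian is linear in $\alpha_Z$, so the dual step changes it by $\langle \alpha_Z^{(i+1)}-\alpha_Z^{(i)},Z^{(i+1)}-R^{(i+1)}\rangle$. By the dual update rule this equals $\tfrac1\rho\|\alpha_Z^{(i+1)}-\alpha_Z^{(i)}\|_F^2$. By assumption (ii), Lemma~\ref{lem:sym_KKT_alpha} holds at iterations $i$ and $i+1$, so the bound \eqref{eqt:sym_alpha_tight} applies and this increase is at most $\tfrac1\rho\|\Delta Z^{(i)}\|_F^2$.

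\emph{Combining.} Summing the three steps gives
\[
\mathcal{L}_\rho^{(i+1)}-\mathcal{L}_\rho^{(i)}\le\Bigl(\tfrac1\rho-\tfrac\rho2\Bigr)\|\Delta Z^{(i)}\|_F^2,
\]
which is nonpositive exactly when $\rho^2\ge 2$, i.e.\ when $\rho\ge\sqrt2$.

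\emph{Main obstacle.} The delicate point is Step 3. Using \eqref{eqt:sym_alpha_tight} requires the KKT identity also at the previous iteration, including the very first one. I would handle this through assumption (ii) holding ``at every iteration'': either require the initialization to satisfy $\alpha_Z^{(0)}=\mathcal{M}\odot(S-Z^{(0)})$, or start the monotonicity claim from $i\ge1$. A second point to check is that the strong-convexity modulus used in Step 2 is $\rho$ uniformly over all entries, including the unobserved ones where $\mathcal{M}_{ab}=0$; it is, since the curvature there is exactly $\rho$.
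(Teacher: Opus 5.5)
Your proposal is correct and is essentially the paper's proof: the same three-way split of the change in $\mathcal{L}_\rho$ into a $W$-block term ($\le 0$ by the descent hypothesis, which you justify cleanly by completing the square), a $Z$-block term ($\le -\tfrac{\rho}{2}\|\Delta Z^{(i)}\|_F^2$ by $\rho$-strong convexity), and the dual term $\tfrac{1}{\rho}\|\alpha_Z^{(i+1)}-\alpha_Z^{(i)}\|_F^2\le\tfrac{1}{\rho}\|\Delta Z^{(i)}\|_F^2$ from the KKT identity, combined into $(\tfrac{1}{\rho}-\tfrac{\rho}{2})\|\Delta Z^{(i)}\|_F^2\le 0$. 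Your first-iteration caveat is well founded, and the paper's proof passes over it: with the algorithm's initialization $\alpha_Z^{(0)}=0$, $Z^{(0)}=W^{(0)}(W^{(0)})^\top$, the identity $\alpha_Z^{(0)}=\mathcal{M}\odot(S-Z^{(0)})$ generally fails, and for $\rho>1$ one can build small examples where $\mathcal{L}_\rho$ strictly increases at $i=0$, so the statement should be read for $i\ge 1$ or with a KKT-consistent initialization, exactly as you propose.
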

\begin{proof}
Adding and subtracting intermediate terms, we decompose the change in $\mathcal{L}_\rho$ as
\begin{align*}
\Delta \mathcal{L}_\rho^{(i)}
&\;=\; \underbrace{\frac{1}{\rho}\|\alpha_Z^{(i+1)}-\alpha_Z^{(i)}\|_F^2}_{(I)}
\;+\; \underbrace{\bigl[\mathcal{L}_\rho(W^{(i+1)},Z^{(i+1)},\alpha_Z^{(i)}) - \mathcal{L}_\rho(W^{(i+1)},Z^{(i)},\alpha_Z^{(i)})\bigr]}_{(\mathcal{A})} \\
&\;+\; \underbrace{\bigl[\mathcal{L}_\rho(W^{(i+1)},Z^{(i)},\alpha_Z^{(i)}) - \mathcal{L}_\rho(W^{(i)},Z^{(i)},\alpha_Z^{(i)})\bigr]}_{(\mathcal{B})}.
\end{align*}
The $Z$-block term $(\mathcal{A})$ is bounded by strong convexity of the $Z$-subproblem in $Z$:
\begin{equation}
(\mathcal{A}) \;\leq\; -\tfrac{\rho}{2}\,\|\Delta Z^{(i)}\|_F^2.
\label{eqt:sym_A_bound}
\end{equation}
The $W$-block term $(\mathcal{B})$ is exactly the change in the $W$-subproblem objective: 
\[
(\mathcal{B}) = h_i(W^{(i+1)}) - h_i(W^{(i)}) \leq 0
\]

by hypothesis (i). Term $(I)$ is bounded by \eqref{eqt:sym_alpha_tight}. Combining,

\begin{align}
\Delta\mathcal{L}_\rho^{(i)}
\;\leq\; \tfrac{1}{\rho}\|\Delta Z^{(i)}\|_F^2 - \tfrac{\rho}{2}\|\Delta Z^{(i)}\|_F^2 + 0
\;=\; \Bigl(\tfrac{1}{\rho} - \tfrac{\rho}{2}\Bigr)\|\Delta Z^{(i)}\|_F^2
\;\leq\; 0
\quad\text{whenever}\quad \rho \geq \sqrt{2}.
\end{align}
\end{proof}

The threshold $\rho \geq \sqrt{2}$ is independent of $\|W\|_{\mathrm{op}}$, $n$, $r$, or any other dimension constant.

\paragraph{Remark on hypothesis (i).}
Hypothesis (i) only requires descent of the $W$-subproblem objective. Hence, any inner solver that does not increase $h_i$ is admissible. This includes majorization-minimization schemes such as sBSUM, and does not require the majorizer to be strongly convex. In the next appendix section, we discuss why this descent condition is sufficient for ADMM to tolerate inexact non-convex inner solves.

\paragraph{Remark on hypothesis (ii).}
The projection onto $\mathcal{Z}$ in the $Z$-update is active on entry $(a,b)$ if and only if the unprojected value

\[
\widehat Z_{ab}^{(i+1)} := \bigl[\mathcal{M}_{ab} \cdot S_{ab} + \rho R_{ab}^{(i+1)} - \alpha_{Z,ab}^{(i)}\bigr] \big/ (\rho + \mathcal{M}_{ab})
\]

falls outside the reconstruction bound. We argue that, for normalized symmetric similarity matrices, this case is unlikely to occur. For this discussion, assume that $S$ is bounded and scaled to $[0, 1]^{n \times n}$:

\begin{enumerate}
    \item \textbf{Observed entries ($\mathcal{M}_{ab}=1$).} 
    By induction, $\alpha_{Z,ab}^{(i)} = S_{ab} - Z_{ab}^{(i)} \in [-1, 1]$. The unprojected value $\widehat Z_{ab}^{(i+1)}$ is therefore a $\rho$-weighted average of $S_{ab}$ and $R_{ab}^{(i+1)}$, up to the current dual correction. The projection can become active only if $(WW^{\top})_{ab}$ leaves the reconstruction bounds.
    \item \textbf{Missing entries ($\mathcal{M}_{ab}=0$).} 
    The unprojected value reduces to $R^{(i+1)}_{ab} - \rho^{-1}\alpha_{Z,ab}^{(i)}$. 
    By induction, $\alpha_{Z,ab}^{(i)} = 0$ on missing entries, so
    \[
    \widehat Z_{ab}^{(i+1)} = (WW^{\top})_{ab}.
    \]
    By the Cauchy--Schwarz inequality, this is bounded by $\sqrt{(WW^{\top})_{aa}(WW^{\top})_{bb}} \approx 1$, since the diagonal of a scaled similarity matrix is anchored to $S_{ii}=1$ (always observed).
\end{enumerate}

Therefore, $\Vert W^{(i)} \Vert_{\mathrm{op}}$ is finite at every iteration $i$. If the projection is active on a small set of entries $\mathcal{S}_i \subset \{1,\dots,n\}^2$, the bound in \eqref{eqt:sym_alpha_tight} on $\|\alpha^{(i+1)} - \alpha^{(i)}\|_F^2$ includes an additional term involving the constraint residual on $\mathcal{S}_i$. Let $\sigma_W^\star := \sup_i \|W^{(i)}\|_{\mathrm{op}}$, which is finite by the boundedness argument above. The resulting threshold is 

\[\rho \geq \max(\sqrt{2},\ \sqrt{8|\mathcal{S}_i|(\sigma_W^\star)^2/n^2}).
\]
When $|\mathcal{S}_i|/n^2 \ll 1$ the second condition is dominated by the first, and Theorem~\ref{thm:sym_decreasing_tight} still holds with $\rho \geq \sqrt{2}$.

Thus, the descent guarantee holds without requiring an excessively large penalty parameter $\rho$. This is useful in practice, since large values of $\rho$ can make the augmented penalty dominate the updates, reducing adaptivity and potentially leading to poorer stationary solutions. In all experiments, we set $\rho = 3$, which is above the sufficient threshold $\sqrt{2}$.

\subsection{Update scheme for \texorpdfstring{$W$}{W}-subproblem}\label{sec:bsum}

The $W$-subproblem 
\[
\min_{W \geq 0} h_i(W), \quad h_i(W) = \frac{\rho}{2} \Vert \widetilde{Z}^{(i)} - WW^\top \Vert^2_F
\]
is non-convex and quartic in the entries of $W$. We adopt the scalar BSUM scheme as the inner solver and describe the algorithm below in our notation.
When all entries of $W$ except $W_{ir}$ are fixed, the inner cost reduces to a univariate quartic on $W \in [0, \infty)$. Let $W$ denote the current intra-sweep iterate, set $\widetilde{Z} := \widetilde{Z}^{(i)}$, and write the candidate as $w \geq 0$ with displacement $\delta := w - W_{ir}$. By direct expansion, 
\begin{equation}
h_i(W + \delta E_{ir}) - h_i(W) = \frac{\rho}{2}g_{ir}(\delta), \quad g_{ir}(\delta) := \frac{a}{4} \delta^4 + \frac{b}{3} \delta^3 + \frac{c}{2} \delta^2 + d \delta
\end{equation}
with coefficients
\begin{align}
a &= 4 \\
b &= 12 W_{ir} \\
c &= 4 \left[ (WW^{\top})_{ii} - \widetilde{Z}_{ii} + (W^{\top} W)_{rr} + W^2_{ir} \right] \\
d &= 4 \left[ (WW^{\top} - \widetilde{Z}) W\right]_{ir}
\end{align}

The polynomial $g_{ir}$ is globally convex whenever $c \geq b^2/(3a)$. When this condition fails, \citet{Shi2017} add the smallest quadratic correction that makes the scalar majorizer convex
\begin{equation}
\widetilde{c} := \max\left( \frac{b^2}{3a} - c, 0\right), \quad \widetilde{g}_{ir}(\delta) := g_{ir}(\delta) + \frac{1}{2} \widetilde{c}\delta^2
\end{equation}
This correction step satisfies 
$\widetilde{g}_{ir}(0) = g_{ir}(0) = 0$, $\widetilde{g}_{ir}(\delta) \geq g_{ir}(\delta)$ for all $\delta \geq -W_{ir}$, i.e. for all $w \geq 0$. 
Moreover, $\widetilde{g}''_{ir}(\delta) \geq 0$ with strict convexity when $c > \frac{b^2}{3a}$. This correction step is applied only when needed and is the smallest quadratic perturbation that ensures convexity. Following \citet{Shi2017}, use $W_{ir}=b/(3a)$ to rewrite the first-order condition for the majorizer as a cubic equation in the candidate entry value. Define
\begin{equation}
p := \frac{3ac-b^2}{3a^2}, \qquad
q := \frac{9abc - 27a^2d - 2b^3}{27a^3}, \qquad
\Delta := \frac{q^2}{4} + \frac{p^3}{27}.
\end{equation}
The scalar sBSUM update is
\begin{equation}
W_{ir}^{\mathrm{new}} = [w_\star]_+,
\qquad
w_\star =
\begin{cases}
\sqrt[3]{\,q/2 - \sqrt{\Delta}\,}
+
\sqrt[3]{\,q/2 + \sqrt{\Delta}\,},
& \text{if } c > b^2/(3a),\\[6pt]
\sqrt[3]{\,b^3/(27a^3) - d/a\,},
& \text{if } c \leq b^2/(3a),
\end{cases}
\label{eqt:sbsum_closed_form}
\end{equation}
where cube roots denote real cube roots. For each entry-update, $\widetilde{g}_{ir}$ is a tight upper bound of $g_{ir}$ at the current value. The closed-form sBSUM update returns the unique one-dimensional minimum over $w \geq 0$. Consequently,
\[
h_i(W^{\text{new}}) \leq h_i(W)
\]
after every entry update. Cycling through all $nk$ entries gives the full-sweep descent
\[
h_i(W^{(i+1)}) \leq h_i(W^{(i)})
\]
which is exactly hypothesis (i) of Theorem~\ref{thm:sym_decreasing_tight}.

\subsection{Algorithm summary}

\begin{algorithm}[!ht]
\caption{Similarity-Based Representation Factorization (SRF)}
\small
\begin{algorithmic}[1]
\Require
    \State Similarity matrix $S \in \mathbb{R}^{n \times n}$
    \State Observation mask $M \in \{0,1\}^{n \times n}$
    \State Target rank $r$, penalty $\rho > 0$
\Ensure Non-negative embedding $W \in \mathbb{R}^{n \times r}_{\ge 0}$

\State \textbf{Initialize:} $W \sim \mathcal{U}(0,1)$, $Z \leftarrow W W^\top$, $\Lambda \leftarrow \mathbf{0}_{n \times n}$

\While{not converged}
    \Statex \textbf{1. W-update}
    \State $T \leftarrow Z + \rho^{-1}\Lambda$ \Comment{Construct proxy target}
    \State $W \leftarrow \arg\min_{W \ge 0} \|T - W W^\top\|_F^2$ \Comment{Solve by scalar sBSUM}

    \Statex \textbf{2. Z-update}
    \State $R \leftarrow W W^\top$
    \For{all entries $(i,j)$}
        \If{$M_{ij} = 1$}
            \State $\widehat Z_{ij} \leftarrow (S_{ij} + \rho R_{ij} - \Lambda_{ij}) / (1+\rho)$
        \Else
            \State $\widehat Z_{ij} \leftarrow R_{ij} - \Lambda_{ij}/\rho$
        \EndIf
        \State $Z_{ij} \leftarrow \operatorname{Proj}_{[\min(S),\,\max(S)]}(\widehat Z_{ij})$
    \EndFor
    \State $Z \leftarrow \frac{1}{2}(Z + Z^\top)$ \Comment{Enforce symmetry}

    \Statex \textbf{3. Dual update}
    \State $\Lambda \leftarrow \Lambda + \rho(Z - W W^\top)$

    \State \textbf{Check Convergence:} $\|Z - WW^\top\|_F < \varepsilon$
\EndWhile
\State \Return $W$
\end{algorithmic}
\end{algorithm}

\section{Cross-validation procedure of SRF}\label{sec:cross-validation}


\subsection{Identifiability when \texorpdfstring{$r$}{r} is misspecified}
\label{subsec:misspecify_k}

Throughout this subsection, bold symbols (e.g., $\rmW$) denote random matrices. Expectations are taken with respect to their joint distribution. We now establish a lower bound on the relative factor error 
\[
\frac{
\mathbb{E}\bigl[\|\rmW^* - \rmW_2\rmP\|_F^2\bigr]
}{
\mathbb{E}\bigl[\|\rmW^*\|_F^2\bigr]
}
\]
incurred when fitting symmetric NMF with a misspecified latent dimension $r_2 \neq r^*$. Here $\rmW^*$ denotes the random factor matrix at the well-specified rank $r^*$, $\rmW_2$ the random factor matrix at the misspecified rank $r_2$, and $\rmP$ the column-matching permutation that minimizes $\Vert\rmW^* - \rmW_2 \rmP \Vert_F^2$ in each realization (with zero-column padding when $r_2 > r^*$). The permutation is required because NMF solutions form an equivalence class under column permutation.

Let $\rmS \in \mathbb{R}^{n \times n}$ be a fixed target matrix and let $\rmW_i \in \mathbb{R}_{\geq 0}^{n \times r_i}$ for $i \in \{1, 2\}$ be random non-negative factor matrices with bounded entries and latent dimensions $r_1, r_2$, in which both approximate the same data matrix $\rmS \in \mathbb{R}^{n \times n}$ in expectation. In particular, we assume the following conditions on the distribution of $(\rmW_1, \rmW_2)$
\begin{align}
\mathbb{E}\bigl[\|\rmS - \rmW_1\rmW_1^\top\|_F^2\bigr]
&=
\mathbb{E}\bigl[\|\rmS - \rmW_2\rmW_2^\top\|_F^2\bigr],
\tag{A1}
\label{ass:equal_error}
\\
\mathbb{E}[\rmW_1\rmW_1^\top]
&=
\mathbb{E}[\rmW_2\rmW_2^\top].
\tag{A2}
\label{ass:equal_reconstruction}
\end{align}

We also assume that all columns of $\rmW_i$ have the same expected squared norm,
\begin{equation}
\mathbb{E}\biggl[
\sum_{j=1}^n (\rmW_i)_{j\kappa}^2
\biggr]
:= \sigma_{\rmW_i}^2
\qquad
\text{for every column index } \kappa
\tag{A3}
\label{ass:column_norm}
\end{equation}
together with a structural independence assumption:
\begin{equation}
\text{the rows of } \rmW_i \text{ are mutually independent for each } i \in \{1, 2\}. \tag{A4} \label{ass:row_independence}
\end{equation}
We also assume weak column overlap:
\begin{equation}
\mathbb{E}\bigl[
(\rmW_i^\top \rmW_i)_{\kappa\lambda}^2
\bigr]
=
\mathcal{O}(n)
\qquad
\text{for all } \kappa \neq \lambda .
\tag{A5}
\label{ass:weak_column_overlap}
\end{equation}
This condition says that distinct latent dimensions do not share enough mass across items for their cross-column inner products to contribute at the same leading order as the squared column norms. Assumptions \ref{ass:equal_error} and \ref{ass:equal_reconstruction} describe the case where the two fits are statistically indistinguishable by reconstruction loss, since their expected residuals and expected reconstructions agree. In this case, the validation error cannot distinguish $r_2$ from $r_1$, so the remaining question is whether the factor itself is identifiable. Assumption \ref{ass:column_norm} imposes equality of expected squared column norms. Assumptions \ref{ass:row_independence} and \ref{ass:weak_column_overlap} control the off-diagonal moments of $\rmW_i^\top \rmW_i$ and justify the variance estimates used in the derivation below.

We first expand \ref{ass:equal_error} and reduce it to a relation involving only the expected squared column norms $\sigma_{\rmW_i}^2$ and dimensions $r_i$. For each $i \in \{1, 2\}$,

\begin{equation}
\mathbb{E}\bigl[\|\rmS - \rmW_i \rmW_i^\top\|_F^2\bigr]
= \mathbb{E}\bigl[\|\rmS\|_F^2\bigr]
\;-\; 2\,\mathbb{E}\bigl[\langle \rmS, \rmW_i \rmW_i^\top \rangle\bigr]
\;+\; \mathbb{E}\bigl[\|\rmW_i \rmW_i^\top\|_F^2\bigr],
\label{eqt:sym_loss_expansion}
\end{equation}

where $\langle A, B \rangle := \mathrm{Tr}(A^\top B)$ denotes the Frobenius inner product. The cross term in \eqref{eqt:sym_loss_expansion} can be written as

\begin{equation}
\mathbb{E}\bigl[\langle \rmS, \rmW_i \rmW_i^\top \rangle\bigr]
\;=\; \bigl\langle \mathbb{E}[\rmS],\, \mathbb{E}[\rmW_i \rmW_i^\top] \bigr\rangle,
\end{equation}

provided that $\rmS$ is fixed, or more generally that $\rmS$ and $\rmW_i$ are independent under the modeling distribution. By \ref{ass:equal_reconstruction}, $\mathbb{E}[\rmW_1 \rmW_1^\top] = \mathbb{E}[\rmW_2 \rmW_2^\top]$, so the cross term takes the same value for $i = 1$ and $i = 2$ and cancels when we equate the two sides of \ref{ass:equal_error}. The term $\mathbb{E}[\|\rmS\|_F^2]$ also cancels, leaving
\begin{equation}
\mathbb{E}\bigl[\|\rmW_1 \rmW_1^\top\|_F^2\bigr]
\;=\; \mathbb{E}\bigl[\|\rmW_2 \rmW_2^\top\|_F^2\bigr].
\end{equation}
Using the cyclic identity
$\|\rmW \rmW^\top\|_F^2 = \mathrm{Tr}((\rmW \rmW^\top)^\top (\rmW \rmW^\top))
= \mathrm{Tr}(\rmW \rmW^\top \rmW \rmW^\top) = \mathrm{Tr}((\rmW^\top \rmW)^2)$, we have
\begin{equation}
\mathbb{E}\bigl[\mathrm{Tr}\bigl((\rmW_1^\top \rmW_1)^2\bigr)\bigr]
\;=\; \mathbb{E}\bigl[\mathrm{Tr}\bigl((\rmW_2^\top \rmW_2)^2\bigr)\bigr].
\label{eqt:sym_var_eq}
\end{equation}
For a single $\rmW_i$ of dimension $n \times r_i$, by definition,
\begin{align}
\mathbb{E}\bigl[\mathrm{Tr}\bigl((\rmW_i^\top \rmW_i)^2\bigr)\bigr]
&\;=\; \mathbb{E}\bigl[\|\rmW_i^\top \rmW_i\|_F^2\bigr]
\;=\; \sum_{\kappa, \lambda = 1}^{r_i} \mathbb{E}\bigl[(\rmW_i^\top \rmW_i)_{\kappa \lambda}^2\bigr] \\
&\;=\; r_i\, \sigma_{\rmW_i}^4 \;+\; \mathcal{O}(r_i^2\, n)
\;=\; r_i\, \sigma_{\rmW_i}^4\,(1 + o(1)),
\end{align}

Since $\sigma_{\rmW_i}^4 = \Theta(n^2)$, the leading term $r_i\sigma_{\rmW_i}^4$ dominates the correction $\mathcal{O}(r_i^2 n)$ for fixed $r_i$, or more generally when $r_i=o(n)$.\footnote{where $f(n) = \Theta(g(n))$ means there exist constants $c_1$, $c_2 > 0$ such that $c_1 g(n) \leq f(n) \leq c_2 g(n)$ for any sufficiently large $n$.} Substituting into \eqref{eqt:sym_var_eq} and retaining the leading-order scaling,
\begin{equation}
  r_1\,\sigma_{\rmW_1}^4
  =
  r_2\,\sigma_{\rmW_2}^4\,(1+o(1))
  \;\;\Longrightarrow\;\;
  \frac{\sigma_{\rmW_1}^2}{\sigma_{\rmW_2}^2}
  =
  \sqrt{\frac{r_2}{r_1}}\,(1+o(1)).
\label{eqt:sym_sigma_ratio}
\end{equation}
Moreover, since the Frobenius norm of $\rmW_i$ is the sum of squared entries, we can group $W$ by column and apply assumption \eqref{ass:column_norm} to get
\begin{equation}
\mathbb{E}\bigl[\|\rmW_i\|_F^2\bigr]
\;=\; \sum_{\kappa = 1}^{r_i} \mathbb{E}\bigl[\textstyle\sum_{j=1}^n (\rmW_i)_{j\kappa}^2\bigr]
\;=\; r_i\, \sigma_{\rmW_i}^2.
\end{equation}
Substituting \eqref{eqt:sym_sigma_ratio}:
  \begin{equation}
  \frac{\mathbb{E}\|\rmW_2\|_F^2}{\mathbb{E}\|\rmW_1\|_F^2}
  \;=\; \frac{r_2\, \sigma_{\rmW_2}^2}{r_1\, \sigma_{\rmW_1}^2}
  \;=\; \sqrt{\frac{r_2}{r_1}}\,(1+o(1)).
  \label{eqt:sym_frob_ratio}
  \end{equation}

We now bound $\mathbb{E}\|\rmW_1 - \rmW_2 \rmP\|_F^2$ from below for an optimal column-matching permutation $\rmP$. Without loss of generality, suppose $r_1 \leq r_2$, and pad $\rmW_1$ with $r_2 - r_1$ zero columns so that both matrices have the same number of columns $r_{\max} = \max(r_1, r_2)$. Let $\rmP \in \mathbb{R}^{r_{\max} \times r_{\max}}$ be a permutation matrix that minimizes $\|\rmW_1 - \rmW_2 \rmP\|_F^2$ (equivalently, maximizes $\langle \rmW_1, \rmW_2 \rmP \rangle$). The padding does not change the squared Frobenius norm, so $\mathbb{E}\|\rmW_2 \rmP\|_F^2 = \mathbb{E}\|\rmW_2\|_F^2$. Using $\|A - B\|_F^2 = \|A\|_F^2 - 2\langle A, B \rangle + \|B\|_F^2$, we have

\begin{equation}
\mathbb{E}\|\rmW_1 - \rmW_2 \rmP\|_F^2
\;=\; \mathbb{E}\|\rmW_1\|_F^2 \;+\; \mathbb{E}\|\rmW_2\|_F^2
\;-\; 2\,\mathbb{E}\bigl[\langle \rmW_1, \rmW_2 \rmP \rangle\bigr].
\label{eqt:sym_dist_expansion}
\end{equation}

By the Cauchy--Schwarz inequality, we have

\begin{align}
\mathbb{E}\bigl[\langle \rmW_1, \rmW_2 \rmP \rangle\bigr]
&\;\leq\; \mathbb{E}\bigl[\|\rmW_1\|_F\, \|\rmW_2 \rmP\|_F\bigr] \\
&\;\leq\; \sqrt{\mathbb{E}\|\rmW_1\|_F^2 \cdot \mathbb{E}\|\rmW_2 \rmP\|_F^2} \\
&\;=\; \sqrt{\mathbb{E}\|\rmW_1\|_F^2 \cdot \mathbb{E}\|\rmW_2\|_F^2}.
\end{align}

Substituting \eqref{eqt:sym_frob_ratio}:

\begin{equation*}
\sqrt{\mathbb{E}\|\rmW_1\|_F^2 \cdot \mathbb{E}\|\rmW_2\|_F^2}
\;=\; \sqrt{\mathbb{E}\|\rmW_1\|_F^2 \cdot \sqrt{\tfrac{r_2}{r_1}}\,\mathbb{E}\|\rmW_1\|_F^2}
\;=\; \Bigl(\tfrac{r_2}{r_1}\Bigr)^{\!1/4} \mathbb{E}\|\rmW_1\|_F^2.
\end{equation*}

Plugging into \eqref{eqt:sym_dist_expansion} and using $\mathbb{E}\|\rmW_2\|_F^2 = (r_2/r_1)^{1/2}\,\mathbb{E}\|\rmW_1\|_F^2$:

\begin{align}
\mathbb{E}\|\rmW_1 - \rmW_2 \rmP\|_F^2
&\;\geq\; \mathbb{E}\|\rmW_1\|_F^2 + \Bigl(\tfrac{r_2}{r_1}\Bigr)^{\!1/2} \mathbb{E}\|\rmW_1\|_F^2 - 2\Bigl(\tfrac{r_2}{r_1}\Bigr)^{\!1/4}\mathbb{E}\|\rmW_1\|_F^2 \\
&\;=\; \biggl(1 - 2\Bigl(\tfrac{r_2}{r_1}\Bigr)^{\!1/4} + \Bigl(\tfrac{r_2}{r_1}\Bigr)^{\!1/2}\biggr)\,\mathbb{E}\|\rmW_1\|_F^2 \\
&\;=\; \biggl(1 - \Bigl(\tfrac{r_2}{r_1}\Bigr)^{\!1/4}\biggr)^{\!2}\,\mathbb{E}\|\rmW_1\|_F^2.
\end{align}

By setting $r_1 = r^*$, $\rmW_1 = \rmW^*$, we then have

\begin{equation}
\mathbb{E}\bigl[\|\rmW^* - \rmW_2 \rmP\|_F^2\bigr]
\;\geq\;
\left[
\Biggl(1 - \biggl(\frac{r_2}{r^*}\biggr)^{\!1/4}\Biggr)^{\!2}
+ o(1)
\right]
\,\mathbb{E}\bigl[\|\rmW^*\|_F^2\bigr].
\label{eqt:sym_relerr}
\end{equation}

Under these assumptions, the lower bound \eqref{eqt:sym_relerr} is strictly positive whenever $r_2 \neq r^*$, so misspecifying $r$ induces a finite relative error in the recovered factor. Thus, symmetric NMF is relatively tolerant of $r$-misspecification, because the latent factor is shared between rows and columns of $\rmS$, allowing $\sigma_{\rmW}^2$ to absorb a $\sqrt{r_2/r^*}$ rescaling. The tolerance is bounded, however, and the cost of misspecification grows monotonically in $|r_2 - r^*|$.

\subsection{Information leakage given a structured observation pattern}
\label{subsec:info_leakage}
The misspecified-$r$ bound shows that choosing the latent dimension $r$ correctly matters, even though Symmetric NMF is relatively tolerant of $r$-misspecification. In practice, however, $r$ is unknown and must be estimated from data. A standard approach is cross-validation (CV), where a subset of off-diagonal entries of $S$ is held out. The model is then fit to the remaining entries, and $r$ is chosen to minimize held-out prediction error. This section explains why this CV procedure can be unreliable for symmetric low-rank matrices.

Note that a rank-$r$ symmetric $n \times n$ matrix has far fewer effective degrees of freedom than its $\binom{n}{2}$ off-diagonal entries. Under a sufficiently informative number of observations, the held-out entries are determined by the \emph{Nystr\"om completion formula}. In the noise-free setting, a rank-$r^*$ fit to the training entries therefore recovers both the training and held-out values exactly. Consequently, the held-out CV error is identically zero at $r = r^*$, even though the held-out entries were not used during fitting. The same argument applies to any $r \geq r^*$, since the true matrix remains feasible within every higher-rank model class. The CV curve therefore plateaus at and above $r^*$ instead of attaining a unique minimum there, which makes entrywise CV unreliable for identifying the true rank.

\begin{theorem}
\label{thm:sym_nystrom}
Let $S = W W^\top$ where $W \in \mathbb{R}^{n \times r}$ has rank $r$.
Fix an index set $I \subseteq [n]$ with $|I| = r$, and define

\begin{equation}
S_I \;:=\; S[I,\,I] \;\in\; \mathbb{R}^{r \times r}
\qquad\text{(principal submatrix on }I\text{),}
\end{equation}

\begin{equation}
S_{p,\,I} \;:=\; (S_{pj})_{j \in I} \;\in\; \mathbb{R}^{r}
\qquad\text{(row of $S$ at index $p$, restricted to columns $I$).}
\end{equation}

\begin{equation}
S_{I,\,q} \;:=\; (S_{jq})_{j \in I} \;\in\; \mathbb{R}^{r}
\qquad\text{(column of $S$ at index $q$, restricted to rows $I$).}
\end{equation}

If $S_I$ is invertible, then for every $p, q \in [n]$,
\begin{equation}
S_{pq} \;=\; S_{p,\,I}^\top \, S_I^{-1} \, S_{I,\,q}.
\label{eqt:sym_nystrom}
\end{equation}
\end{theorem}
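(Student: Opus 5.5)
The plan is to express everything through the factor $W$. Write $w_p^\top$ for the $p$-th row of $W$, and let $W_I \in \mathbb{R}^{r\times r}$ denote the submatrix of $W$ consisting of the rows indexed by $I$. Since $S = WW^\top$, every entry is $S_{pq} = w_p^\top w_q$. The three objects in the statement then factor as
\[
S_I = W_I W_I^\top, \qquad S_{p,I} = W_I w_p, \qquad S_{I,q} = W_I w_q .
\]
Each of these follows by reading off the relevant rows and columns of $WW^\top$.

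The key step is to pass invertibility from $S_I$ to $W_I$. Because $W_I$ is square of size $r$, we have $\det S_I = (\det W_I)^2$. So $S_I$ invertible forces $W_I$ invertible, and then
\[
S_I^{-1} = W_I^{-\top} W_I^{-1}.
\]
Substituting gives
\[
S_{p,I}^\top S_I^{-1} S_{I,q} = w_p^\top W_I^\top\, W_I^{-\top} W_I^{-1}\, W_I w_q = w_p^\top w_q = S_{pq},
\]
which is \eqref{eqt:sym_nystrom}. No non-negativity of $W$ is needed, and the rank-$r$ hypothesis on $W$ is implied by the invertibility of $S_I$ in any case.

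There is no genuine obstacle. The only point that needs care is noticing that $|I| = r$ makes $W_I$ square, since otherwise $S_I^{-1}$ would not factor. In a remark I would also add the more general version: if $|I| \ge r$ and $W_I$ has full column rank, the same identity holds with $S_I^{-1}$ replaced by the Moore--Penrose pseudoinverse $S_I^{+}$. This uses $W_I^\top (W_I W_I^\top)^{+} W_I = I_r$. That version is the form relevant to the leakage discussion, because once the training entries cover the $r$ anchor rows, the held-out entries are fully determined.
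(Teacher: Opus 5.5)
Your proof is correct and is essentially the paper's argument: write $S_I = W_I W_I^\top$, $S_{p,I} = W_I w_p$, $S_{I,q} = W_I w_q$, pass invertibility from $S_I$ to $W_I$, and cancel to obtain $w_p^\top w_q = S_{pq}$. Your pseudoinverse remark for $|I| \ge r$ with $W_I$ of full column rank also holds, since $W_I^\top (W_I W_I^\top)^{+} W_I = I_r$, but the paper does not need or state it.
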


\begin{proof}
Let $W_I := W[I, :] \in \mathbb{R}^{r \times r}$ be the submatrix of $W$ formed by the rows indexed by $I$, and let $w_p \in \mathbb{R}^r$ denote the $p$-th row of $W$ (as a column vector). Then $S_{p,I} = W_I w_p$ and $S_I = W_I W_I^\top$, so $S_I$ is invertible if and only if $W_I$ is invertible. By direct computation,

\begin{align}
S_{p,\,I}^\top \, S_I^{-1} \, S_{I,\,q}
&\;=\; (W_I w_p)^\top \, (W_I W_I^\top)^{-1} \, (W_I w_q) \\
& \;=\; w_p^\top \, W_I^\top (W_I^\top)^{-1} W_I^{-1} W_I \, w_q \\
& \;=\; w_p^\top w_q \\
& \;=\; S_{pq}. \qedhere
\end{align}
\end{proof}

This identity implies that any observation set containing an invertible principal block and the corresponding cross-block entries determines the matrix entirely. This yields the following corollary.

\begin{corollary}[Information leakage]
\label{cor:sym_leakage}
Suppose that the observation set $\Omega \subseteq \binom{[n]}{2}$ of off-diagonal index pairs contains, for some index set $I \subseteq [n]$ with $|I| = r$,
\begin{itemize}
\item all off-diagonal entries within $I$:
$\{(i,j) : i, j \in I,\ i < j\} \subseteq \Omega$;
\item all cross-block entries:
$\{(i,j) : i < j,\ |\{i,j\}\cap I| = 1\} \subseteq \Omega$.
\end{itemize}
Suppose also that the diagonal entries $\{S_{ii}: i\in I\}$ are observed.
If $S_I$ is invertible, then every unobserved off-diagonal entry $S_{pq}$ with $(p, q) \notin \Omega$ is determined by the observed entries via \eqref{eqt:sym_nystrom}.
\end{corollary}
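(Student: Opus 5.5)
The corollary follows from Theorem~\ref{thm:sym_nystrom} once we check that every quantity in the right-hand side of \eqref{eqt:sym_nystrom} is observed.

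Fix an unobserved off-diagonal pair $(p,q)\notin\Omega$ with $p\neq q$. First observe that neither $p$ nor $q$ can lie in $I$. If both did, $(p,q)$ would be a within-$I$ pair. If exactly one did, $(p,q)$ would be a cross-block pair. Either way, the hypotheses place $(p,q)$ in $\Omega$, a contradiction. Hence $p,q\in[n]\setminus I$.

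Next, we verify that the three ingredients of the Nystr\"om formula are built only from observed entries.
\begin{itemize}
\item The block $S_I$ consists of the off-diagonal entries within $I$, which are observed by the first hypothesis, and the diagonal entries $S_{ii}$ for $i\in I$, which are observed by assumption. Symmetry of $S$ gives the entries with $i>j$ from those with $i<j$.
\item The vector $S_{p,I}$ has entries $S_{pj}$ with $j\in I$ and $p\notin I$. Each such pair has exactly one index in $I$, so it is a cross-block entry and is observed, again using symmetry to orient the pair as $i<j$.
\item The same argument applies to $S_{I,q}$.
\end{itemize}

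Since $S=WW^\top$ with $W$ of rank $r$, and $S_I$ is invertible, Theorem~\ref{thm:sym_nystrom} gives $S_{pq}=S_{p,I}^\top S_I^{-1}S_{I,q}$. The right-hand side is a fixed function of observed values only, so $S_{pq}$ is determined by the observations.

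This also shows that any two rank-$r$ symmetric factorizable matrices agreeing on $\Omega\cup\{(i,i):i\in I\}$, with invertible $I$-block, agree everywhere. That is the sense of \emph{determined} needed for the leakage conclusion.

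There is no real obstacle here. The only points needing care are the case analysis showing that unobserved pairs avoid $I$ entirely, and the bookkeeping that the diagonal of $S_I$ is covered by the separate diagonal-observation hypothesis, since $\Omega$ contains only off-diagonal pairs.
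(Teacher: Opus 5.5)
Your proposal is correct and follows the paper's own reasoning: the paper presents the corollary as an immediate consequence of Theorem~\ref{thm:sym_nystrom}, since the assumed observation pattern supplies exactly $S_I$, $S_{p,I}$ and $S_{I,q}$. You make explicit the bookkeeping that the paper leaves implicit, namely that any unobserved pair has $p,q\notin I$ and that the diagonal of $S_I$ comes from the separate diagonal-observation hypothesis.
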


The total number of off-diagonal observations required by this observation pattern is

\begin{equation}
\binom{r}{2} + r(n - r) \;=\; nr - \frac{r(r+1)}{2}.
\end{equation}

It remains to justify that the condition ``$S_I$ is invertible'' holds except on a measure-zero set of factor matrices $W$. We use the following standard lemma.

\begin{lemma}[Polynomial zero sets]
\label{lem:sym_polyzeros}
Let $p \in \mathbb{R}[x_1, \dots, x_N]$ be a nonzero polynomial in $N$ real variables. Then its zero set $\{x \in \mathbb{R}^N : p(x) = 0\}$ has Lebesgue measure zero.
\end{lemma}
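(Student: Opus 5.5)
I will prove the lemma by induction on the number of variables $N$, combining Fubini--Tonelli with the elementary fact that a nonzero univariate polynomial has finitely many roots.

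\emph{Base case, $N=1$.} A nonzero $p \in \mathbb{R}[x_1]$ of degree $d$ has at most $d$ real roots. Its zero set is therefore finite and has Lebesgue measure zero. (A nonzero constant has empty zero set.)

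\emph{Inductive step.} Assume the claim holds in $N-1$ variables, and let $p \in \mathbb{R}[x_1,\dots,x_N]$ be nonzero. Expand $p$ in powers of the last variable,
\[
p(x', x_N) = \sum_{k=0}^{d} c_k(x')\, x_N^k, \qquad x' = (x_1,\dots,x_{N-1}),
\]
where each $c_k \in \mathbb{R}[x_1,\dots,x_{N-1}]$ and the leading coefficient $c_d$ is not the zero polynomial. Let $Z = \{p = 0\}$; it is closed, hence measurable. By Tonelli,
\[
\lambda_N(Z) = \int_{\mathbb{R}^{N-1}} \lambda_1\bigl(\{x_N : p(x',x_N)=0\}\bigr)\, dx' .
\]
Split $\mathbb{R}^{N-1}$ according to whether $c_d(x') = 0$. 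If $c_d(x') \neq 0$, the slice polynomial $x_N \mapsto p(x',x_N)$ is nonzero, so its zero set is finite and has $\lambda_1$-measure zero. The remaining set $\{c_d = 0\}$ has $\lambda_{N-1}$-measure zero by the induction hypothesis. On it the integrand may equal $+\infty$, but this does not affect the integral. Hence $\lambda_N(Z) = 0$.

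The only point needing care is measurability of $Z$ and of the slice function. Both are handled by closedness of $Z$ together with Tonelli's theorem for nonnegative measurable functions on the product $\mathbb{R}^{N-1}\times\mathbb{R}$.

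For the intended application to Corollary~\ref{cor:sym_leakage}, take $p(W) = \det(W_I)^2 = \det(S_I)$. This is a polynomial in the entries of $W$. It is nonzero, since choosing $W_I = \mathrm{Id}_r$ gives the value $1$. So $S_I$ is invertible outside a Lebesgue-null set of $W$.
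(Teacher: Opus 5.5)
Your proof is correct and complete. The induction on $N$ works as written: Tonelli applied to the closed set $\{p=0\}$, finiteness of roots on slices where the leading coefficient $c_d(x')\neq 0$, and the induction hypothesis on the null set $\{c_d=0\}$, where the integrand may be $+\infty$. This is the standard textbook argument, and it covers the degenerate case $d=0$ without extra work.

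The paper gives no proof to compare against. It introduces the statement as ``the following standard lemma'' and uses it directly in Lemma~\ref{lem:sym_invertible} and Theorem~\ref{thm:sym_sharp_ident}, so your argument supplies exactly what it omits.

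Your closing remark uses $\det(W_I)^2=\det(S_I)$, which is equivalent to the paper's use of $\det W_I$. Going from ``Lebesgue-null set of $W$'' to ``probability zero'' still needs the density assumption on the rows. The paper supplies that assumption separately in Lemma~\ref{lem:sym_invertible}.
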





\begin{lemma}[Almost-sure invertibility]
\label{lem:sym_invertible}
Suppose the rows of $W$ are drawn i.i.d.\ from a distribution on $\mathbb{R}^r_{\geq 0}$ that has a density with respect to Lebesgue measure. Then, for every fixed index set $I \subseteq [n]$ with $|I| = r$, the submatrix $S_I = W_I W_I^\top$ is invertible with probability one.
\end{lemma}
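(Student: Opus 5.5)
The plan is to reduce invertibility of $S_I$ to invertibility of the $r\times r$ matrix $W_I$, and then apply Lemma~\ref{lem:sym_polyzeros} to the polynomial $\det(W_I)$.

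First, as in the proof of Theorem~\ref{thm:sym_nystrom}, $S_I = W_I W_I^\top$ with $W_I\in\mathbb{R}^{r\times r}$ square. Hence $\det(S_I)=\det(W_I)^2$, so $S_I$ is invertible if and only if $\det(W_I)\neq 0$. The event in question therefore depends only on the $r$ rows $w_i$, $i\in I$, which are i.i.d. By independence, their joint law on $\mathbb{R}^{r\times r}\cong\mathbb{R}^{r^2}$ is the product of the row densities. In particular, it has a density $f(x)=\prod_{i\in I} f_{\mathrm{row}}(x_i)$ with respect to Lebesgue measure on $\mathbb{R}^{r^2}$, supported in the non-negative orthant.

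Second, $p(x):=\det(x)$ is a polynomial in the $N=r^2$ entries of $x$. It is not the zero polynomial, since $p(I_r)=1$ (the identity matrix is even non-negative, though that is not needed). By Lemma~\ref{lem:sym_polyzeros}, the zero set $Z=\{x:\det x=0\}$ has Lebesgue measure zero. Then
\[
\Pr(\det W_I=0)=\int_Z f(x)\,dx=0,
\]
because the integral of a density over a Lebesgue-null set vanishes; this is absolute continuity. So $S_I$ is invertible with probability one. Since there are only finitely many index sets $I$, a union bound gives the stronger statement that almost surely every $S_I$ is simultaneously invertible.

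The only step needing care is the passage from row-wise densities to a joint density on $\mathbb{R}^{r^2}$. This uses independence of the rows, which the i.i.d. hypothesis supplies. Without independence, a joint law could concentrate on the null set $Z$ even though each row marginal has a density.
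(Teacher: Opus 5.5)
Your proof is correct and follows essentially the same route as the paper: reduce invertibility of $S_I$ to $\det W_I \neq 0$, observe that $\det$ is a nonzero polynomial in the $r^2$ entries and so has a Lebesgue-null zero set by Lemma~\ref{lem:sym_polyzeros}, and use independence of the rows to obtain a joint density on $\mathbb{R}^{r^2}$. Beyond the paper's argument, you make explicit the identity $\det S_I = (\det W_I)^2$ and the witness $\det(I_r) = 1$, and you add the remark that a finite union bound gives simultaneous invertibility of every $S_I$.
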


\begin{proof}
The matrix $S_I$ is invertible if and only if $\det W_I \neq 0$. Therefore, it is sufficient to bound the probability that $\det W_I = 0$. If we treat the $r^2$ entries of $W_I$ as formal variables, by the Leibniz formula the map $W_I \mapsto \det W_I$ is a polynomial of total degree $r$ in the $r^2$ variables. The polynomial is not identically zero. Hence, by Lemma~\ref{lem:sym_polyzeros}, its zero set has Lebesgue measure zero. The joint density of the $r$ rows of $W_I$ is the product of $r$ row densities and therefore admits a density on $\mathbb{R}^{r \times r} = \mathbb{R}^{r^2}$. Hence, $\mathbb{P}(\det W_I = 0) = 0$ and $S_I$ is invertible with probability one.
\end{proof}

Corollary \ref{cor:sym_leakage} makes this information leakage explicit. Whenever $\Omega$ contains the structured pattern $\{(p,q): p < q,\ |\{p,q\}\cap I| \geq 1\}$ for some size-$r$ index set $I$ with $S_I$ invertible, every unobserved entry of $S$ can be recovered explicitly by a closed-form rational expression in the observed entries.

\subsection{Information leakage for arbitrary observation patterns}

The Nystr\"om argument in Appendix~\ref{subsec:info_leakage} gives a concrete example of information leakage, where the observed entries cover all entries incident to some size-$r$ index set $I$. We now show that the same phenomenon persists for sufficiently large random observation sets.

\begin{theorem}[Identifiability of $S$]
\label{thm:sym_sharp_ident}
Suppose that
\begin{enumerate}[label=(H\arabic*), ref=H\arabic*]
    \item \label{ass:r_less_n} $r < n$;
    \item \label{ass:enough_observations} $|\Omega| \geq nr - \binom{r}{2}$;
    \item \label{ass:full_jacobian} there exists some $W_0 \in \mathbb{R}^{n \times r}$ at which $\operatorname{rank}\bigl(J_\Omega(W_0)\bigr) = nr - \binom{r}{2}$.
\end{enumerate}

Then there exists a Lebesgue-null set $\mathcal{N} \subseteq \mathbb{R}^{n \times r}$ such that, for every $W \notin \mathcal{N}$ and every $(p, q) \notin \Omega$, the value of $S_{pq} = (WW^\top)_{pq}$ is locally determined by the observations $\{S_{ij}\}_{(i,j) \in \Omega}$ in a neighborhood of $W$. More precisely, it is given by an analytic function of those observations.
\end{theorem}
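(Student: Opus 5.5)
The plan is to recast the statement as a constant-rank argument for two polynomial maps. Let $F_\Omega : \mathbb{R}^{n\times r}\to\mathbb{R}^{\Omega}$, $F_\Omega(W) = ((WW^\top)_{ij})_{(i,j)\in\Omega}$, and let $F:\mathbb{R}^{n\times r}\to\mathbb{R}^{n\times n}$, $F(W)=WW^\top$. Both are polynomial, hence analytic, and $J_\Omega$ denotes the Jacobian of $F_\Omega$. The key structural fact is orthogonal invariance: $F(WQ)=F(W)$ for every $Q\in O(r)$. Differentiating along $Q=\exp(tA)$ with $A$ skew-symmetric shows that the $\binom{r}{2}$-dimensional space $T_W:=\{WA : A^\top=-A\}$ lies in $\ker J_F(W)\subseteq\ker J_\Omega(W)$. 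Consequently $\operatorname{rank} J_\Omega(W)\le \operatorname{rank}J_F(W)\le nr-\binom{r}{2}$ for every $W$. Hypothesis (\ref{ass:enough_observations}) is simply consistent with this maximal value being attainable.

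The first step is to construct the null set $\mathcal{N}$. By (\ref{ass:full_jacobian}), some $\bigl(nr-\binom{r}{2}\bigr)$-minor of $J_\Omega$ is nonzero at $W_0$. The entries of $J_\Omega$ are linear in $W$, so this minor is a nonzero polynomial in the $nr$ entries of $W$. By Lemma~\ref{lem:sym_polyzeros} its zero set is null. Next, using (\ref{ass:r_less_n}), some $r\times r$ minor of $W$ is a nonzero polynomial, so the set where $W$ fails to have full column rank is also null. Let $\mathcal{N}$ be the union of these two sets. For $W\notin\mathcal{N}$ two things hold. First, $W$ has trivial infinitesimal stabilizer: $WA=0$ forces $A=0$, so $\dim T_W=\binom{r}{2}$. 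Second, $\operatorname{rank}J_\Omega(W)$ equals the maximal value $nr-\binom{r}{2}$. Since maximal rank is an open condition, it remains constant on a neighborhood $U$ of $W$.

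Next comes the kernel comparison. On $U$ we have $\dim\ker J_\Omega = \binom{r}{2} = \dim T_W$, so the inclusions above force $\ker J_\Omega(W') = T_{W'} = \ker J_F(W')$ for all $W'\in U$ (shrinking $U$ inside the complement of $\mathcal{N}$'s closed rank-drop loci if needed). I will then apply the analytic constant rank theorem to $F_\Omega$ on $U$. After shrinking $U$, the image $F_\Omega(U)$ is an analytic submanifold of $\mathbb{R}^\Omega$ of dimension $nr-\binom{r}{2}$, and $F_\Omega$ admits an analytic local section $\sigma$ defined on a neighborhood of $F_\Omega(W)$ in that image. The fibers of $F_\Omega|_U$ are connected analytic submanifolds of dimension $\binom{r}{2}$ whose tangent spaces are $\ker J_\Omega = \ker J_F$. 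Therefore $F$ has zero derivative along each fiber and is constant on it.

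Finally, I define $G := F\circ\sigma$, which is analytic. For $W'$ near $W$, the points $\sigma(F_\Omega(W'))$ and $W'$ lie in the same local fiber, so $F(W') = G(F_\Omega(W'))$. Reading off the $(p,q)$ coordinate expresses $S_{pq}$ as an analytic function of $\{S_{ij}\}_{(i,j)\in\Omega}$, which is the claim. I expect the main obstacle to be this last step, namely ensuring that the fibers of $F_\Omega$ are connected near $W$ so that $F$ is genuinely constant on them. The plan handles this by working in the constant-rank chart, where $F_\Omega$ becomes a linear projection and its fibers are affine slices of a small cube, hence connected. Alternatively, one can note that each local fiber coincides with the local $O(r)$-orbit $\{WQ\}$, since both are $\binom{r}{2}$-dimensional manifolds with the same tangent spaces.
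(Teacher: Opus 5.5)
Your proof is correct. Its first half, the null set $\mathcal N$ built from a nonvanishing $\bigl(nr-\binom r2\bigr)$-minor of $J_\Omega$ together with the rank-deficiency locus of $W$, is exactly the paper's construction. The routes diverge after that.

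The paper passes to a subset $\widetilde\Omega\subseteq\Omega$ of exactly $nr-\binom r2$ entries, so that $\phi_{\widetilde\Omega}$ is a submersion at $W$. It then uses the implicit function theorem to get a $\binom r2$-dimensional level set through $W$. It identifies that level set locally with the orbit $\{WQ : Q\in O(r)\}$, using inclusion plus equal dimension and equal tangent spaces at $W$. This gives the concrete conclusion that nearby factors reproducing the observations are rotations of $W$. The final claim of analytic dependence, however, is only asserted from the implicit function theorem.

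You instead keep all of $\Omega$ and show that $\operatorname{rank} J_\Omega$ is constant on a whole neighborhood, via lower semicontinuity plus the orbit bound. You then upgrade the kernel inclusion to $\ker J_\Omega(W')=\ker J_F(W')$ at every nearby $W'$. Using connected fibers in a constant-rank chart, you conclude that $F=G\circ F_\Omega$ with $G=F\circ\sigma$ analytic.

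This explicitly supplies the uniformity over nearby fibers that the paper's last sentence leaves implicit. Knowing only that the fiber through $W$ itself coincides with the orbit does not immediately produce an analytic $G$ on a neighborhood of the observation vector. Your route does not by itself identify the fibers as $O(r)$-orbits, but your closing remark shows how to recover that.

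One small imprecision: you justify $\operatorname{rank} J_\Omega(W)\le nr-\binom r2$ ``for every $W$'' via $\dim T_W=\binom r2$, which requires $W$ to have full column rank. That is all you actually use, since $U$ lies in the full-rank locus. Still, the statement should be restricted accordingly, or proved separately for rank-deficient $W$.
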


Before proving Theorem~\ref{thm:sym_sharp_ident}, we introduce two auxiliary observations used in its proof.

\paragraph{Rank of the Jacobian of $\phi$}
Let $\phi : \mathbb{R}^{n \times r} \to \operatorname{Sym}^{n\times n}$ be defined by $\phi(W) = WW^{\top}$. The polynomial map $\phi$ has the following property: For any $Q \in O(r) := \left\{ Q \in \mathbb{R}^{r \times r} : Q^{\top} Q = I_r \right\}$, we have

\[
\phi(W Q) = (WQ) (WQ)^{\top} = WQQ^{\top} W^{\top} = WW^{\top} = \phi(W)
\]

Thus, $\phi$ is constant on each orbit $\mathcal{O}_W := \left\{ WQ: Q \in O(r)\right\}$. The dimension of this orbit is determined by the dimension of $O(r)$. Differentiating $Q^{\top}Q = I_r$ at $Q = I_r$, with $Q(t) = I_r + tA + O(t^2)$ for some matrix $A$, gives

\[
I_r = Q(t)^{\top} Q(t) = I_r + t(A + A^{\top}) + O(t^2)
\]

forcing $A + A^{\top} = 0$. This implies that the tangent space to $O(r)$ at $I_r$ is the space of $r \times r$ skew-symmetric matrices, which has dimension $\binom{r}{2}$. For $W$ of rank $r$, the map $Q \mapsto WQ$ from $O(r)$ to $\mathcal{O}_W$ is injective, so $\mathcal{O}_W$ has dimension $\binom{r}{2}$. The differential of $\phi$ at $W$ is

\[
d \phi_W(V) = \frac{d}{d\epsilon} \Big|_{\epsilon=0} \phi(W + \epsilon V) = \frac{d}{d\epsilon}\Big|_{\epsilon=0} \left[ WW^{\top} + \epsilon (VW^{\top} + WV^{\top}) + O(\epsilon^2) \right] = VW^{\top} + WV^{\top}
\]

for each direction $V \in \mathbb{R}^{n\times r}$.

We now show that, for any $W \in \mathbb{R}^{n \times r}$ of rank $r$,

\[
\operatorname{rank}(d \phi_W) = nr - \binom{r}{2}.
\]

We first consider the canonical case $W_0 := \begin{pmatrix} I_r \\ 0 \end{pmatrix} \in \mathbb{R}^{n\times r}$. For any $V = \begin{pmatrix} V_1 \\ V_2 \end{pmatrix}$ where $V_1 \in \mathbb{R}^{r\times r}$ and $V_2 \in \mathbb{R}^{(n-r) \times r}$, we have

\[
d \phi_{W_0}(V) = V W_0^{\top} + W_0 V^{\top} = \begin{pmatrix} V_1 + V_1^{\top} & V_2^{\top} \\ V_2 & 0 \end{pmatrix}
\]

In the canonical case, the image of the differential consists of all symmetric $n \times n$ matrices of this form. Its dimension is therefore

\[
\binom{r+1}{2} + r(n-r) = \frac{r(r+1)}{2} +nr - r^2 = nr - \frac{r(r-1)}{2} = nr - \binom{r}{2}
\]

where the first term $\binom{r+1}{2}$ is the dimension of the top-left symmetric block $V_1 + V_1^{\top}$, and the second term $r(n-r)$ is the dimension of $V_2$. 

For a general $W$ of rank $r$, we can choose an orthogonal $U \in \mathbb{R}^{n \times n}$ such that $U^{\top} W = \begin{pmatrix} W_1 \\ 0 \end{pmatrix}$ with $W_1 \in \mathbb{R}^{r \times r}$ invertible. Then

\[
U^{\top} d\phi_W(V) U = (U^{\top}V)(U^{\top}W)^{\top} + (U^{\top} W)(U^{\top}V)^{\top} = d\phi_{U^{\top}W}(U^{\top}V).
\]

Since $U$ is orthogonal, this implies that $\operatorname{rank}(d\phi_{W}) = \operatorname{rank}(d\phi_{U^{\top}W})$.

Substituting $V \mapsto VW_{1}^{-\top}$ in $d \phi_{U^{\top}W}$ we have

\[
d\phi_{U^{\top}W}(VW_1^{-\top})= (VW_1^{-\top})(U^{\top}W)^{\top} + (U^{\top}W)(VW_1^{-\top})^{\top} = V \begin{pmatrix} I_r & 0 \end{pmatrix} + \begin{pmatrix} I_r \\ 0 \end{pmatrix} V^{\top} = d\phi_{W_0}(V)
\]

Thus, for any $W \in \mathbb{R}^{n\times r}$ of rank $r$
\[
\operatorname{rank}(d \phi_W) = nr - \binom{r}{2}.
\] 

By rank-nullity, the kernel of $d \phi_W$ has dimension $\binom{r}{2}$. We have also shown that the tangent space $\{ WA : A^{\top} = - A\}$ is a $\binom{r}{2}$-dimensional subspace of this kernel, since $\phi$ is constant on $\mathcal{O}_W$. Combining these, we can conclude that the inclusion is an equality, i.e., the kernel of $d \phi_W$ is exactly the tangent space to the orbit $\mathcal{O}_W$ at $W$.

\paragraph{The Implicit Function Theorem.}

We use the following form of the implicit function theorem. Let $F : \mathbb{R}^N \to \mathbb{R}^M$ be $C^1$, and let $x_0 \in \mathbb{R}^N$ be a point at which $dF_{x_0}$ has rank $M$ (so $F$ is a submersion at $x_0$, and $N \geq M$). Then there exists an open neighborhood $U$ of $x_0$ in $\mathbb{R}^N$ such that the level set $F^{-1}(F(x_0)) \cap U$ is the image of a $C^1$ embedding from an open subset of $\mathbb{R}^{N-M}$ --- i.e., an $(N-M)$-dimensional smooth submanifold of $\mathbb{R}^N$.

\begin{proof}





We first construct $\mathcal{N}$ as the union of two measure-zero sets, $\mathcal{N}_1$ and $\mathcal{N}_2$, each excluding a distinct exceptional case. The first exceptional case is that the Jacobian has rank deficiency. By \ref{ass:full_jacobian}, there exists a square submatrix 
$\mathcal{J}(W) \in \mathbb{R}^{\widetilde{n} \times \widetilde{n}}$
of $J_{\Omega}(W)$ such that $\det \mathcal{J}(W) \neq 0$, where $\widetilde{n} := nr - \binom{r}{2}$. Since each entry of $J_{\Omega}(W)$ is linear in $W$ by definition, the map $W \mapsto \det \mathcal{J}(W)$ is a polynomial in the entries of $W$. As this polynomial is not identically zero, Lemma~\ref{lem:sym_polyzeros} implies that

\[
\mathcal{N}_1 := \{ W \in \mathbb{R}^{n \times r} : \det \mathcal{J}(W) = 0 \}
\]

has measure zero. 

A second exceptional case is rank deficiency of the factor. Specifically, if $\operatorname{rank}(W) < r$, then the previous rank result for the differential does not apply. We have

\[
\operatorname{rank}(W) = r \iff \text{some $r$ rows of $W$ are independent} \iff \exists\, I \subseteq [n],\ \vert I \vert = r,\ \det W[I, :] \neq 0
\]

Here, $W[I,:]$ denotes the $r \times r$ submatrix formed by the rows indexed by $I$. Negating the statement gives
\[
\operatorname{rank}(W) < r \iff \forall\, I \subseteq [n],\ \vert I \vert = r,\ \det W[I, :] = 0
\]

To combine the $\binom{n}{r}$ possible index sets into a single polynomial, we use the standard sum-of-squares trick:
\[
(\forall I, \det W[I, :] = 0) \iff \sum_{\vert I \vert = r} \left( \det W[I,:] \right)^2 = 0
\]

Since a sum of non-negative real numbers vanishes if and only if every term vanishes, we have

\[
\left\{ W : \operatorname{rank}(W) < r \right\} = \left\{ W : \sum_{\vert I \vert = r} \left( \det W[I, :] \right)^2 = 0\right\}
\]

Now define 
\[ 
P(W) = \sum_{\vert I \vert = r} \left( \det W[I, :] \right)^2 
\]

Then $P$ is a polynomial in the entries of $W$ and each determinant $\det W[I,:]$ is a polynomial of degree $r$. Therefore, $P$ has degree $2r$. To see that $P$ is not identically zero, evaluate it at the canonical matrix $\begin{pmatrix} I_r \\ 0 \end{pmatrix}$. For $I = \{ 1, 2, \ldots, r\}$, we have $W[ I, :] = I_r \Rightarrow \det W [I, :] = 1$. For any other $I$, at least one row of $W[I, :]$ comes from the zero block, so $\det W[I,:] = 0$. Hence, $P\!\left(\begin{pmatrix} I_r \\ 0 \end{pmatrix}\right) = 1^2 + 0 + \cdots + 0 = 1 \neq 0 \Rightarrow P \not\equiv 0$. Then by Lemma~\ref{lem:sym_polyzeros}, $\mathcal{N}_2 = \{W: P(W) = 0 \}$ has measure zero.

Now fix $W \not\in \mathcal{N} := \mathcal{N}_1 \cup \mathcal{N}_2$ , and let $\widetilde{\Omega} \subseteq \Omega$ be the index set witnessing $\operatorname{rank}(J_{\Omega}(W)) = nr - \binom{r}{2}$. Define the restriction map
\[ 
\phi_{\widetilde{\Omega}} : \mathbb{R}^{n \times r} \to \mathbb{R}^{nr - \binom{r}{2}}, \quad W \mapsto \{(WW^{\top})_{ij}\}_{(i,j) \in \widetilde{\Omega}} 
\]

Its differential at $W$ has matrix $J_{\widetilde{\Omega}}(W)$, which has full row rank $nr - \binom{r}{2}$. Hence, $\phi_{\widetilde{\Omega}}$ is a submersion at $W$. By the implicit function theorem, there is an open neighborhood $U \subseteq \mathbb{R}^{n \times r}$ of $W$ such that the solution set

\[\mathcal{S} := \left\{ W' \in U : \phi_{\widetilde{\Omega}}(W') = \phi_{\widetilde{\Omega}}(W)\right\}\]

is a smooth submanifold of $U$ of dimension

\[ \dim \mathcal{S} = nr - \left( nr - \binom{r}{2} \right) = \binom{r}{2}. \]

The tangent space at $W$ is $T_{W} \mathcal{S} = \ker J_{\widetilde{\Omega}}(W)$. Recall the orbit of $W$,
\[
\mathcal{O}_W := \{ WQ : Q \in O(r) \}
\]

For every $Q \in O(r)$, we have

\[
\phi_{\widetilde{\Omega}}(WQ) = \{ (WQQ^{\top} W^{\top})_{ij} \}_{(i,j) \in \widetilde{\Omega}} = \{ (WW^{\top})_{ij} \}_{(i,j) \in \widetilde{\Omega}} = \phi_{\widetilde{\Omega}}(W)
\]

Therefore $WQ \in \mathcal{S}$ whenever $WQ \in U$. Thus $\mathcal{O}_{W} \cap U \subseteq \mathcal{S}$. Moreover, the map $Q \mapsto WQ$ from $O(r)$ to $\mathcal{O}_W$ is smooth, since it is linear in $Q$.  It is injective when $\operatorname{rank}(W) = r$, which holds because $W \not\in \mathcal{N}_2$. Since $\dim O(r) = \binom{r}{2}$, the image $\mathcal{O}_W$ is a smooth submanifold of $\mathbb{R}^{n\times r}$ of dimension $\binom{r}{2}$ near $W$. Differentiating the orbit map at $Q = I_r$ along a path $Q(t) \in O(r)$ with $Q(0) = I_r$ and $\frac{d Q(t)}{dt} \big|_{t=0} = A$ gives

\[
\frac{d }{dt} \big|_{t=0} WQ(t) = WA 
\]

Therefore $T_{W} \mathcal{O}_W = \{ WA : A^{\top} = -A \}$. Since $\ker J_{\widetilde{\Omega}}(W) = \{ WA : A^{\top} = -A\}$, we have $T_W \mathcal{O}_W = T_W \mathcal{S}$. Both $\mathcal{O}_W \cap U$ and $\mathcal{S}$ are smooth $\binom{r}{2}$-dim manifolds passing through $W$, with $\mathcal{O}_W \cap U \subseteq \mathcal{S}$ and identical tangent spaces at $W$. Therefore, the inclusion $\mathcal{O}_W \hookrightarrow \mathcal{S}$ is a local diffeomorphism near $W$. After possibly shrinking $U$, we have
\[
\mathcal{S} = \mathcal{O}_W \cap U
\]

Therefore, every $W'$ near $W$ satisfying
\[
\phi_{\widetilde{\Omega}}(W') = \phi_{\widetilde{\Omega}}(W) 
\]

must have the form $W' = WQ$ for some $Q \in O(r)$. Hence,

\[
W' (W')^{\top} = (WQ)(WQ)^{\top} = WQQ^{\top} W^{\top} = WW^{\top} 
\]

This equality holds entry-wise, and in particular for every $(p, q) \not\in \Omega$, 
\[ 
S_{pq} = (WW^{\top})_{pq} = (W' (W')^{\top})_{pq} 
\]

Finally, since $\phi_{\widetilde{\Omega}}$ is polynomial and has full-rank differential at $W$, the implicit function theorem also gives local real-analytic dependence on the observed coordinates. Therefore, the recovered missing entries are locally smooth functions of the observed entries. This smoothness implies that identifiability persists under perturbation of the observations.
\end{proof}

\paragraph{Remark.} For random sampling of $\Omega$, the relevant phenomenon is matrix completion \citep{Candes2009,Gross2011}. Standard results imply that, under incoherence and sufficiently many uniformly sampled entries, a rank-$r$ matrix is uniquely determined by $O(nr \log^c n)$ observed entries with high probability. Thus, even randomly held-out entries may not be genuinely independent validation targets. Once the training set crosses the matrix-completion threshold, the held-out entries are already determined by the training entries.


\subsection{Cross-validation protocol}
\label{subsec:cv_protocol}
To select the representation dimension $r$, we use a cross-validation protocol calibrated to symmetric similarity matrices. The protocol addresses two issues that affect entrywise CV on low-rank matrices: held-out entries can be implicitly recovered from the training entries through matrix-completion, and the effective training fraction depends on the number of folds. 

\paragraph{Calibration.} From the observed similarity matrix $S$, the protocol determines two quantities without reference to any specific factorization model:
\begin{itemize}
\item a spectral cutoff $k_{\mathrm{cut}}$, the number of leading eigendirections of $S$ that remain stable under random off-diagonal subsampling, and
\item an operating sampling probability $p^*$, the smallest sampling probability at which the top-$k_{\mathrm{cut}}$ spectral structure of the subsampled matrix recovers at least a fraction $1-\delta$ of the reference spectral mass.
\end{itemize}
We use $\delta = 0.10$ throughout, so $p^*$ is the smallest probability at which $\geq 90\%$ of the top-$k_{\mathrm{cut}}$ spectral mass is preserved. Both $k_{\mathrm{cut}}$ and $p^*$ are model-independent calibration quantities determined entirely by $S$.

\paragraph{Fold-invariant cross-validation.} Given $p^*$ and a chosen number of folds $k_{\mathrm{cv}}$, we sample an outer observation mask over unordered off-diagonal pairs, retaining each pair independently and mirroring the mask symmetrically, with probability
\begin{equation}
p_{\mathrm{cv}} =
\min\left\{
0.95,\,
p^*\,\frac{k_{\mathrm{cv}}}{k_{\mathrm{cv}} - 1}
\right\}.
\label{eq:fold_invariant_p}
\end{equation}
The cap prevents the outer-mask probability from becoming degenerate when the fold-invariant value would be close to or above one. Ordinary $k_{\mathrm{cv}}$-fold cross-validation is then performed within this observed pool. Folds are assigned at the level of unordered off-diagonal pairs and mirrored symmetrically. When the cap is inactive, each fold's expected training fraction equals $p^*$ regardless of $k_{\mathrm{cv}}$, so the recovered rank $\hat r$ is less sensitive to the arbitrary fold-count choice.

\paragraph{Implementation in this work.} We used $k_{\mathrm{cv}} = 5$ folds and $\delta = 0.10$ for all analyses. For each dataset, the calibrated $p^*$ used to construct the outer mask is reported in Table~\ref{tab:datasets}. With these values and \eqref{eq:fold_invariant_p}, the masking construction is fully specified.

\begin{figure}[h]
    \centering
    \includegraphics[width=\linewidth]{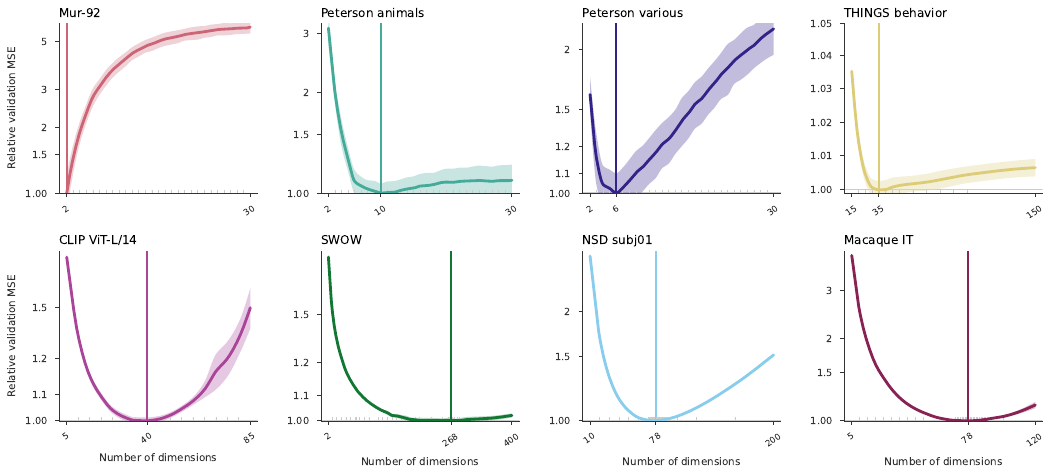}
    \caption{\textbf{Cross-validated rank selection across datasets.}
    Relative validation MSE as a function of the candidate number of dimensions $r$ for each dataset (Mur-92, Peterson (Animals), Peterson (Various), THINGS (Behavior), CLIP ViT-L/14, SWOW, NSD (subj01), THINGS (Macaque)). Curves are normalized to the minimum validation error of each dataset. The vertical line in each panel marks the cross-validated rank $r^*$ reported in Table~\ref{tab:datasets}. The shaded band shows the standard deviation across 5 folds $\times$ 5 repeats. The outer observation mask is constructed from the dataset-specific calibrated $p^*$ via \eqref{eq:fold_invariant_p}.}
    \label{fig:cv_curves}
\end{figure}

\section{Dataset specifics}

\begin{table}[!htbp]
  \centering
  \small
  \setlength{\tabcolsep}{4pt}
  \caption{\textbf{Overview of datasets used for SRF evaluation.} $N$ denotes the number of items, Coverage\ the percentage of off-diagonal item pairs represented in the similarity matrix, $p^*$ the sampling probability used to construct the outer mask \eqref{eq:fold_invariant_p}, and $r^*$ the rank selected by 5-fold cross-validation. $R^2$ measures reconstruction quality. For SWOW, reconstruction quality is reported as link-prediction AUC. CV reliability denotes mean cross-validated split-half dimension reliability across 30 random seeds. A dash indicates that the metric is not applicable.}
  \label{tab:datasets}
  \begin{tabular}{llccccccr}
  \toprule
  Dataset & Modality & $N$ & Coverage\ (\%) & $p^*$ & $r^*$ & $R^2$ & AUC & CV reliability \\
  \midrule
  Mur-92             & Behavior   & 92       & 100  & 0.31 &   2 & 0.67 & -    & >0.99 \\
  Peterson Animals & Behavior   & 120      & 100  & 0.70 &  10 & 0.90 & -    & >0.99 \\
  Peterson Various & Behavior   & 120      & 100  & 0.76 &   6 & 0.86 & -    & >0.99 \\
  CLIP ViT-L/14      & Model      & 1{,}854  & 100  & 0.71 &  40 & 0.88 & -    & 0.94  \\
  Macaque IT   & Neural     & 1{,}854  & 100  & 0.73 &  78 & 0.98 & -    & 0.91  \\
  THINGS odd-one-out  & Behavior   & 1{,}854  & 99.9 & 0.65 &  35 & 0.61 & -    & 0.96  \\
  SWOW               & Behavior   & 8{,}647  & 1.27 & 0.53 & 268 & -    & 0.95 & 0.94  \\
  NSD (subj01)       & Neural     & 9{,}841  & 100  & 0.62 &  78 & 0.99 & -    & 0.82  \\
  \bottomrule
  \end{tabular}
\end{table}

\clearpage

\section{Statistical power by dimension variance}

\begin{figure}[h]
    \centering
    \includegraphics[width=0.9\linewidth]{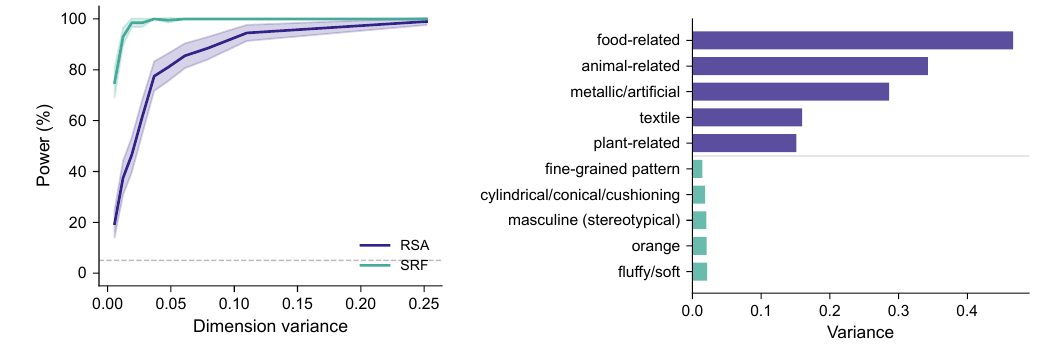}
    \caption{\textbf{SRF advantage is most pronounced for low-variance dimensions.}
    \textbf{a},~Statistical power of RSA and SRF as a function of dimension variance in the SPoSE embedding at SNR\,=\,1.0, with 95\% confidence intervals. RSA power drops sharply for dimensions with low variance, which contribute little to overall similarity and are therefore difficult to detect when tested against the full matrix. SRF maintains high detection rates across the full variance range by isolating each dimension before testing.
    \textbf{b},~Example SPoSE dimensions sorted by variance. High-variance dimensions (top) capture broad semantic categories such as \textit{food-related} or \textit{animal-related}, while low-variance dimensions (bottom) capture fine-grained properties such as \textit{fluffy/soft} or \textit{orange}. The dashed line separates the five highest-variance from the five lowest-variance dimensions.}
    \label{fig:variance_quartile}
\end{figure}

\section{Semantic similarity benchmarks}

\begin{figure}[h]
    \centering
    \includegraphics[width=0.9\linewidth]{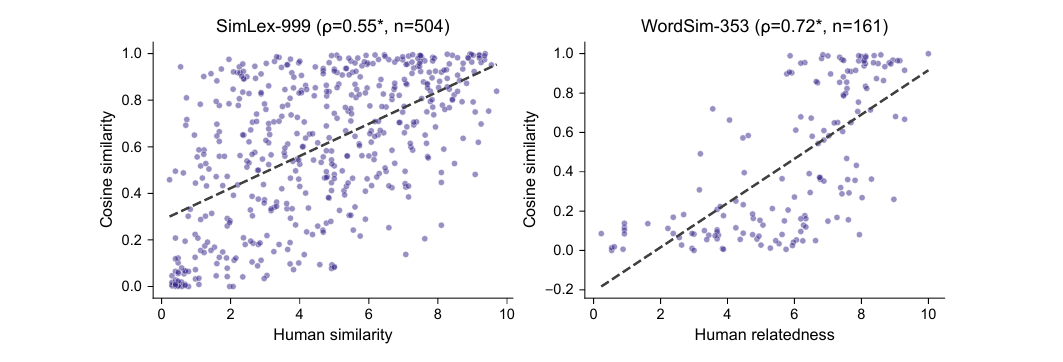}
    \caption{\textbf{SRF embeddings from word associations predict human similarity and relatedness judgments.}
    Cosine similarity between SRF embedding vectors (derived from the Small World of Words dataset) is plotted against human ratings from two independent benchmarks.
    \textbf{a},~SimLex-999 \citep{Hill2015} measures genuine semantic similarity, where word pairs are rated by how similar their meanings are (e.g., \textit{smart}--\textit{intelligent}).
    \textbf{b},~WordSim-353 \citep{Finkelstein2002} measures broader semantic relatedness, including associative relationships (e.g., \textit{car}--\textit{road}).
    Spearman correlations ($\rho$) and number of word pairs with available embeddings ($n$) are shown. Dashed lines show linear fits.}
    \label{fig:similarity_benchmarks}
\end{figure}

\end{document}